\documentclass{article}

\PassOptionsToPackage{dvipsnames}{xcolor}

% to avoid loading the natbib package, add option nonatbib:
\usepackage{iclr2025_conference,times}

\newcommand*\diff{\mathop{}\!\mathrm{d}}

% \usepackage[dvipsnames]{xcolor}

% Recommended, but optional, packages for figures and better typesetting:
\usepackage{microtype}
\usepackage{graphicx}
\usepackage{subfigure}
\usepackage{booktabs}       % professional-quality tables
\usepackage{amsfonts}       % blackboard math symbols
\usepackage{nicefrac}       % compact symbols for 1/2, etc.
\usepackage{enumitem}
\usepackage{wrapfig}
\usepackage{multirow}

% hyperref makes hyperlinks in the resulting PDF.
% If your build breaks (sometimes temporarily if a hyperlink spans a page)
% please comment out the following usepackage line and replace
% \usepackage{icml2023} with \usepackage[nohyperref]{icml2023} above.
\usepackage{hyperref}

% Attempt to make hyperref and algorithmic work together better:

% For theorems and such
\usepackage{amsmath}
\usepackage{amssymb}
\usepackage{mathtools}
\usepackage{amsthm}
\usepackage{algorithm2e}
\RestyleAlgo{ruled}

\usepackage{dsfont}

% if you use cleveref..
\usepackage[capitalize,noabbrev]{cleveref}

%%%%%%%%%%%%%%%%%%%%%%%%%%%%%%%%
% THEOREMS
%%%%%%%%%%%%%%%%%%%%%%%%%%%%%%%%

\newtheorem{theorem}{Theorem}

\newtheorem{lemma}{Lemma}

\newtheorem{assumption}{Assumption}
%Custom
\newcommand{\E}{\mathbb{E}}
\newcommand{\R}{\mathbb{R}}

\newcommand{\prob}{\mathbb{P}}
\newcommand{\indep}{\perp \!\!\! \perp}

\newcommand{\mathup}[1]{\text{\textup{#1}}}
\usepackage{xspace}

\usepackage{bbm}

\usepackage[utf8]{inputenc} % allow utf-8 input
\usepackage[T1]{fontenc}    % use 8-bit T1 fonts
\usepackage{hyperref}       % hyperlinks
\usepackage{url}            % simple URL typesetting
\usepackage{booktabs}       % professional-quality tables
\usepackage{amsfonts}       % blackboard math symbols
\usepackage{nicefrac}       % compact symbols for 1/2, etc.
\usepackage{microtype}      % microtypography

\usepackage[para]{footmisc}
\usepackage{ragged2e}

\usepackage{tikz}
\usepackage{graphicx}

\newcommand{\bound}[1]{{\color{blue}#1}}

\newcommand*\circledgreen[1]{%
\tikz[baseline=(char.base)]{
  \node[shape=circle, draw=ForestGreen!60, fill=ForestGreen!10, thick, inner sep=1pt] (char) {\scriptsize\textsf{#1}};
}}

\newcommand*\circled[1]{%
\tikz[baseline=(char.base)]{
  \node[shape=circle, draw=NavyBlue!60, fill=NavyBlue!10, thick, inner sep=1pt] (char) {\scriptsize\textsf{#1}};
}}

\newcommand*\circledgray[1]{%
\tikz[baseline=(char.base)]{
  \node[shape=circle, draw=black!60, fill=black!10, thick, inner sep=1pt] (char) {\scriptsize\textsf{#1}};
}}

\title{Learning Representations of Instruments for Partial Identification of Treatment Effects}

\centering
\author{Jonas Schweisthal\textsuperscript{1, 2, 5}  \And Dennis Frauen\textsuperscript{1, 2} \And Maresa Schröder\textsuperscript{1, 2} \And Konstantin Hess\textsuperscript{1, 2} \And Niki Kilbertus\textsuperscript{2, 3, 4} 
\And Stefan Feuerriegel\textsuperscript{1, 2} }

\iclrfinalcopy % Uncomment for camera-ready version, but NOT for submission.
\begin{document}

\maketitle
\addtocounter{footnote}{1}
\footnotetext{LMU Munich}
\addtocounter{footnote}{1}
\footnotetext{Munich Center for Machine Learning}
\addtocounter{footnote}{1}
\footnotetext{School of Computation, Information and Technology, TU Munich}
\addtocounter{footnote}{1}
\footnotetext{Helmholtz Munich}
\addtocounter{footnote}{1}
\footnotetext{Corresponding author (\texttt{jonas.schweisthal@lmu.de})}
\justifying

\begin{abstract}
Reliable estimation of treatment effects from observational data is important in many disciplines such as medicine. However, estimation is challenging when unconfoundedness as a standard assumption in the causal inference literature is violated. In this work, we leverage arbitrary (potentially high-dimensional) instruments to estimate bounds on the conditional average treatment effect (CATE). Our contributions are three-fold: (1)~We propose a novel approach for partial identification through a mapping of instruments to a discrete representation space so that we yield valid bounds on the CATE. This is crucial for reliable decision-making in real-world applications. (2)~We derive a two-step procedure that learns tight bounds using a tailored neural partitioning of the latent instrument space. As a result, we avoid instability issues due to numerical approximations or adversarial training. Furthermore, our procedure aims to reduce the estimation variance in finite-sample settings to yield more reliable estimates. (3)~We show theoretically that our procedure obtains valid bounds while reducing estimation variance. We further perform extensive experiments to demonstrate the effectiveness across various settings. Overall, our procedure offers a novel path for practitioners to make use of potentially high-dimensional instruments (e.g., as in Mendelian randomization). 
\end{abstract}

\section{Introduction}
Estimating the {conditional average treatment effect (CATE)} from observational is an important task for personalized decision-making in medicine \citep{feuerriegel2024causal}. For example, a common question in medicine is to estimate the effect of alcohol consumption on the onset of cardiovascular diseases \citep{holmes2014association}. There are many reasons, including costs and ethical concerns, why CATE estimation is often based on observational data (such as, e.g., electronic health records, clinical registries).  

\begin{wrapfigure}[14]{r}{0.36\textwidth}
    \centering
    \vspace{-.5cm}
    \includegraphics[width=\linewidth]{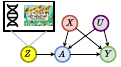}
    \caption{Overview of the IV setting. We consider complex instruments $Z$ (e.g., gene data, text, images), observed confounders $X$, unobserved confounders $U$, a binary treatment $A$, and an outcome $Y$.
    \label{fig:setting}
    }
    \vspace{-3cm}
\end{wrapfigure}

However, identifying the CATE from observational data is challenging as it typically requires \emph{strong} assumptions in the form of \emph{unconfoundedness} \citep{Rubin.1974}.  Unconfoundedness assumes there exist no additional unobserved confounders $U$ between treatment $A$ and outcome $Y$. If the unconfoundedness assumption is violated, a common strategy is to leverage \textbf{instrumental variables~(IVs)} $Z$. IVs affect only the treatment $A$ but exclude unobserved confounding between $Z$ and $Y$, which often can be ensured by design such as for randomized studies with non-compliance \citep{Imbens.1994}. The causal graph for the IV setting is shown in Fig.~\ref{fig:setting}. 

\textbf{Motivational example:} \emph{Mendelian randomization.}  Mendelian randomization \citep{pierce2018mendelian} refers to the use of genetic information as instruments $Z$ to estimate the effect of a treatment or exposure $A$ (e.g., alcohol consumption) on some medical outcome $Y$ (e.g., cardiovascular diseases). In this setting, there are further patient characteristics that are observed ($X$) but also unobserved ($U$), which one accounts for through the instrument. Yet, common challenges are that \textbf{(i)} instruments with genetic information are often \emph{high-dimensional} and \textbf{(ii)} involve \emph{complex, non-linear relationships between instruments and treatment intake or exposure}. \hfill

However, existing IV methods using machine learning for point estimation of the CATE rely on \emph{strong simplifying assumptions} ($\rightarrow$ violating \textbf{(ii)} from above). For example, some methods assume linearity in some feature space in the CATE and make other, strict parametric assumptions on the unobserved confounders such as additivity or homogeneity \citep{Hartford.2017, Singh.2019, Xu.2021}. Yet, such simplifying assumptions are often \emph{not} realistic and can even lead to unreliable and false conclusions by the mis-specification of the CATE. 

A potential remedy is to use IVs for \textbf{partial identification} of the CATE where one circumvents any hard parametric assumptions by estimating upper and lower bounds of the CATE \citep{Manski.1990}. This is usually sufficient in medical practice when one is merely interested in whether a treatment variable (e.g., exposure as in Mendelian randomization) has a positive or a negative effect. So far, methods for partial identification of the CATE in IV settings are rare. There exist closed-form bounds (i.e., via a fixed target estimand that can be learned), yet only for the setting with \underline{both} \emph{discrete} instruments and \emph{discrete} treatments \citep{Balke.1997}.\footnote{Originally, the paper \citep{Balke.1997} derives the bounds for the average treatment effect (ATE) but which can be straightforwardly extended to the CATE.} 

Existing machine learning methods for partial identification are typically designed for \emph{simple} instruments that are binary or discrete ($\rightarrow$ violating \textbf{(i)} from above).\footnote{In Mendelian randomization, this then requires further simplifications. For example, many applications of Mendelian randomization in medicine treat the genetic variants as \emph{discrete} instruments (sometimes also by grouping) or use a predefined risk or polygenic score as a \emph{one-dimensional} continuous instrument \citep{pierce2018mendelian}. Hence, rich information from complex gene data is lost.} Alternatively, methods that extend partial identification for continuous instruments require \emph{unstable} training paradigms such as adversarial learning \citep{Kilbertus.2020, Padh.2023} which becomes even more unstable for more complex instruments.  In contrast, there is a scarcity of methods that can deal robustly with continuous, as well as \emph{complex} and potentially high-dimensional instruments such as, e.g., gene expressions as in Mendelian randomization but also text, images, or graphs.\footnote{In Appendix~\ref{app:RW_assumptions}, we provide an extended discussion about the real-world relevance and applicability of our method.}

\textbf{Our paper:} In this work, we leverage complex instruments for partial identification of the CATE. Specifically, we allow for instruments that can be continuous and potentially high-dimensional (such as gene information) and, on top of that, we explicitly allow for complex, non-linear relationships between instruments and treatment intake or exposure. In the rest of this paper, we refer to this setting as ``complex'' instruments. 

To this end, we proceed as follows. (1)~We propose a novel approach for partial identification through a mapping of complex instruments to a discrete representation space so that we yield valid bounds on the CATE. We motivate our approach in Fig.~\ref{fig:motivation}. (2)~We derive a two-step procedure that learns tight bounds using a neural partitioning of the latent instrument space. As a result, we avoid instability issues due to numerical approximations or adversarial training, which is a key limitation of prior works. We further improve the performance of our procedure by explicitly reducing the estimation variance in finite-sample settings to yield more reliable estimates. (3)~We provide a theoretical analysis of our procedure and perform extensive experiments to demonstrate the effectiveness across various settings.

\textbf{Contributions:}\footnote{Code is available at \url{https://github.com/JSchweisthal/ComplexPartialIdentif}.} (1)~To the best of our knowledge, this is the first IV method for partial identification of the CATE based on complex instruments. (2)~We derive a two-step procedure to learn tight bounds. (3)~We demonstrate the effectiveness of our method both theoretically and numerically.

\section{Related Work}\label{sec:related_work}

\textbf{Machine learning for CATE estimation with IV:} Existing works have different objectives. One literature stream leverages IVs for CATE estimation but focuses on settings where the treatment effect can be identified from the data. This includes work that extends the classical two-stage least-squares estimation to non-linear settings by learning non-linear feature spaces \citep{Singh.2019, Xu.2021}, deep conditional density estimation in the first stage \citep{Hartford.2017}, or using moment conditions \citep{Bennett.2019}. Another literature stream aims at new machine learning methods with favorable properties such as being doubly robust \citep{Kennedy.2019, Ogburn.2015, Semenova.2021, Syrgkanis.2019} or multiply robust \citep{Frauen.2023b}.  

\begin{wrapfigure}[20]{r}{0.7\textwidth}
    \centering
    \includegraphics[width=\linewidth]{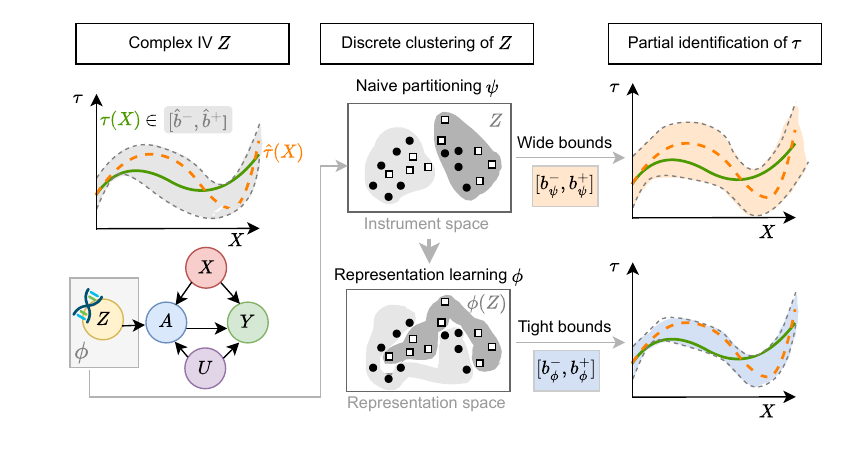}
    \vspace{-.9cm}
    \caption{Leveraging complex instruments for partial identification of the CATE through discrete representations of $Z$. Naive discretization on the IV input space leads to wide, and thus non-informative, bounds. Our method learns a latent representation $\phi(Z)$ to yield tight bounds.}
    \label{fig:motivation}
\end{wrapfigure}

Recently, researchers started applying machine learning methods to IVs from Mendelian randomization \citep{legault2024novel, malina2022deep}, which is our motivational example from above. However, these works aim at point-identified CATE estimation with IVs. As a result, these rely on \emph{hard} and generally untestable assumptions on some effects in the causal graph, such as linearity, monotonicity, additivity, or homogeneity \citep{ Wang.2018}. This is unlike our method for partial identification that does \emph{not} require such hard assumptions and that is non-parametric.

\textbf{Partial identification:} Partial identification aims to identify and learn upper and lower bounds of some causal quantity (e.g., the CATE) when the causal quantity itself cannot be point identified from the data and assumptions. In a general setting with binary treatments, \citet{robins1989analysis} and \citet{Manski.1990} derived closed-form bounds on the ATE for bounded outcomes $Y$. Further work extended these ideas to settings with binary instrumental variables, binary treatments, and binary outcomes \citep{Balke.1994, Balke.1997} to derive tighter bounds. Newer approaches for discrete variables include the works of \citet{Duarte.2023} and \citet{Guo.2022}. \citet{swanson2018partial} provide an extensive overview of partial identification in this setting. Other works focus on how to leverage additional observed confounders to further tighten bounds on the ATE  \citep[see, e.g.,][]{levis2023covariate}. However, these works do \underline{not} focus on efficiently leveraging continuous or even high-dimensional instruments for learning tight bounds, unlike our work that is tailored to such complex instruments.

Another literature stream focuses on partial identification under general causal graphs \citep{Balazadeh.2022}, including IV settings with continuous variables such as continuous treatments \citep{Gunsilius.2020, Hu.2021, Kilbertus.2020, Padh.2023}. However, these methods either make strong assumptions about the treatment response functions or require unstable optimization via adversarial training and/or generative modeling such as through using GANs. This can easily result in \emph{unreliable} estimates of bounds for finite data, especially with high-dimensional instruments. Further, these methods are \emph{not} directly tailored for binary treatments, unlike our method.

\textbf{Research gap:} To the best of our knowledge, reliable machine learning methods for partial identification of the CATE with complex instruments are missing. To draw conclusions about CATEs (as in, e.g., Mendelian randomization), our method  is the first to: (i)~make use of the complex instrument information (e.g., continuous or high-dimensional), (ii)~avoid making strong parametric assumptions by focusing on partial identification, and (iii)~avoid unstable training procedures such as adversarial learning.

\section{Problem Setup}
\textbf{Setting:} We focus on the standard IV setting \citep{Angrist.1996,Wooldridge.2013}. Hence, we consider instruments (e.g., gene data, text, images) given by $Z \in  \mathcal{Z} \subseteq \R^d$ but, unlike previous research, allow the instruments to be complex. As such, we allow the instruments to be continuous and potentially high-dimensional. We further have access to an observational dataset $\mathcal{D} = \left\{z_i, x_i, a_i, y_i\right\}_{i=1}^{n}$ of size $n$. The data is sampled i.i.d. from a population $(Z, X, A, Y) \sim \mathbb{P}$, with observed confounders $X \in \mathcal{X} \subseteq \R^p$, binary treatments $A \in \mathcal{A} \subseteq \{0, 1\}$, and bounded outcomes $Y \in \mathcal{Y} \subseteq [s_1, s_2] \subseteq \R$. Additionally, we allow for unobserved confounders $U$ of arbitrary form between $A$ and $Y$. 

We further assume a causal structure as shown in Fig.~\ref{fig:setting}. In particular, we assume that $Z$ is an instrumental variable that has an effect on the treatment $A$ but no direct effect on the outcome $Y$ except through $A$. Further, we assume that $Z$ is independent of $X$, e.g., by randomization. In Appendix~\ref{app:RW_assumptions}, we provide an extended discussion to show the real-world relevance and validity of our assumptions in different settings.

\textbf{Notation:} Throughout our work, we denote the \emph{response function} by $\mu^a(x, z) := \mathbb{E}[Y|X=x, A=a, Z=z]$ and the \emph{propensity score} by $\pi(x, z) := \mathbb{P}(A=1 | X=x, Z=z)$.

% \TODO{copy from other papers ... what about $\mathbb{E}_X$}

\textbf{CATE:} We use the potential outcomes framework \citep{Rubin.1974} to formalize our causal inference problem. Let $Y(a) \in \mathcal{Y}$ denote the potential outcome under treatment $A=a$. We are thus interested in the CATE $\tau(x) = \mathbb{E}[Y(1) - Y(0) | X=x]$.

\textbf{Identifiability:} We make the following standard assumptions from the literature in partial identification with IVs \citep{Angrist.1996}. 
\begin{assumption}[Consistency]\label{ass:consistency}  $Y(A) = Y$.
 \end{assumption}
 \begin{assumption}[Exclusion]\label{ass:exclsion}
 $Z\indep Y(A) \mid (X, A, U)$, where $U$ is an unobserved confounder between $A$ and $Y$. This implies that $Z$ affects $Y$ only through $A$ and there is no additional unobserved confounding between $Z$ and $Y$.
 \end{assumption}
 \begin{assumption}[Independence]\label{ass:independence}
     $Z \indep (U, X)$.
 \end{assumption}

Note that, however, Assumptions~\ref{ass:consistency}--\ref{ass:independence} from the standard IV setting are \emph{not} sufficient to ensure identifiability of the CATE \citep{Gunsilius.2020}. To ensure identifiability, one would require additional assumptions, such as linearity or, more generally, additive noise assumptions \citep{Hartford.2017, Wang.2018}. Yet, such assumptions are highly restrictive and are neither testable nor typically ensured in real-world scenarios. Hence, this motivates our objective to perform partial identification instead. 

\textbf{Objective}: We frame our objective as a \emph{partial identification} problem and thus focus on estimating \emph{valid} bounds $( b^-(x) , b^+(x))$ for the CATE $\tau(x)$ such that $b^-(x) \leq \tau(x) \leq b^+(x)$ holds for all possible $x \in \mathcal{X}$. Furthermore, the bounds should be \emph{informative}, i.e., we would like to minimize the expected bound width $\mathbb{E}_X[b^+(X) -  b^-(X)]$, while still ensuring validity. Formally, we aim to solve
\begin{equation}\label{eq:objective_population}
    b_\ast^-, b_\ast^+  \in  \; \underset{b^-, b^+}{\arg\min} \, \mathbb{E}_X [b^+(X) - b^-(X)] \quad \text{s.t.} \quad b^-(x) \leq \tau(x) \leq b^+(x) \quad \text{for all} \quad x \in \mathcal{X}.
\end{equation} 

\section{Partial identification of CATE with complex instruments}

\subsection{Overview}

We now present our proposed method to solve the partial identification problem from Eq.~\eqref{eq:objective_population}. Solving Eq.~\eqref{eq:objective_population} directly is \emph{infeasible} because it involves the unknown CATE $\tau(x)$. Hence, we propose the following approach:

\begin{tabular}{|p{.99\textwidth}}
\textbf{Outline:} \circledgreen{1}~We learn a discretized representation (also called partitioning) $\phi(Z)$ of the instrumental variable $Z$. \circledgreen{2}~We then derive closed-form bounds given the discrete representation $\phi$.
\circledgreen{3}~We transform the closed-form bounds back to our original bounding problem and, in particular, express all quantities involved as quantities that can be estimated from observational data.
\end{tabular}

Below, we first explain why existing closed-form bounds are \emph{not} directly applicable and why deriving such bounds is non-trivial. We then proceed by providing the corresponding theory for the above method. Specifically, we first take a population view to show theoretically that our bounds are valid (Sec.~\ref{sec:population_view}). Then, we take a finite-sample view and present an estimator (Sec.~\ref{sec:finite-sample_view}). 

\emph{Limitations of existing bounds:} There exist different approaches for bounding treatment effects (see Sec.~\ref{sec:related_work}) using continuous instruments, yet these either require additional assumptions or can easily become unstable, especially for high-dimensional $Z$. Furthermore, these bounds consider continuous treatments but are \emph{not tailored} for binary treatments (e.g., whether a drug is administered). Hence, we derive custom bounds for our setting.

\emph{Why is the derivation non-trivial?} For binary treatments, it turns out that there exist closed-form solutions for bounds whenever the instrument $Z$ is discrete. That is, the existing bounds for the average treatment effect (ATE) with continuous bounded outcome proposed in \citep{Manski.1990} can be extended to non-parametric closed-form bounds for the CATE \citep{schweisthal2024meta-learners}. While these bounds are useful in a setting with discrete instruments $Z$, they are \underline{not} directly applicable to continuous or even high-dimensional $Z$ due to two main reasons: (1)~The bounds need to be evaluated for \emph{all} combinations $l, m \in \mathcal{Z}^2 \subseteq \R^d \times \R^d$, which is \emph{intractable}. (2)~Evaluating the bounds only on a random subset of combinations $l , m$ can result in \emph{arbitrary high} estimation variance for regions with a low joint density of $p(X=x, Z=l)$ or $p(X=x, Z=m)$.
Hence, we must derive a novel method for estimating bounds based on complex instruments (that are, e.g., continuous or high-dimensional), yet this is a highly \emph{non-trivial} task. 

\subsection{Population view}
\label{sec:population_view}

In the following theorem, we provide a novel theoretical result of how to obtain valid bounds based on discrete representations $\Phi(Z)$ of the instrument $Z$.

\begin{theorem}[Bounds for arbitrary instrument discretizations]\label{thrm:bounds_phi} Let
$\phi: \mathcal{Z} \xrightarrow{} \{0, 1, \ldots, k\}$ be an arbitrary mapping from the high-dimensional instrument $Z$ to a discrete representation. We define
\begin{equation} 
\mu_{\phi}^a(x, \ell) =  \int_Z \frac{\mu^a(x, z) \mathbb{P}(\phi(Z)=\ell| Z=z)}{\mathbb{P}(A=a, \phi(Z)=\ell)}  \mathbb{P}(A=a| Z=z) \mathbb{P}(Z=z) \diff z   \quad \text{and}
\end{equation}
\begin{equation} 
\pi_{\phi}(x, \ell) =  \int_Z \frac{\pi(x, z) \mathbb{P}(\phi(Z)=\ell | Z=z)}{\mathbb{P}(\phi(Z)=\ell)} \mathbb{P}(Z=z)\diff z.
\end{equation}
Then,  under Assumptions~\ref{ass:consistency}, \ref{ass:exclsion}, and \ref{ass:independence}, the CATE $\tau(x)$ is bounded by
\begin{equation}
b_\phi^-(x) \leq \tau(x) \leq b_\phi^+(x),
\end{equation}
with
\begin{equation}\label{eq:bounds_phi}
    b_\phi^+(x) = \min_{l, m} b^+_{\phi; l, m}(x) \quad \text{and} \quad b_\phi^-(x) = \max_{l, m} b^-_{\phi;l, m}(x) 
\end{equation}
where
\begin{equation}
    b^+_{\phi;l, m}(x) =\pi_\phi(x, l) \mu_\phi^{1}(x, l) + (1 - \pi_\phi(x, l)) s_2   - (1-{\pi}_\phi(x, m)) {\mu}_\phi^{0}(x, m) -  {\pi}_\phi(x, m) s_1 ,
\end{equation}
%and
\vspace{-0.4cm}
\begin{equation}
    b^-_{\phi;l, m}(x) ={\pi}_\phi(x, l) {\mu}_\phi^{1}(x, l) + (1 - {\pi}_\phi(x, l)) s_1   - (1-\pi_\phi(x, m)) {\mu}_\phi^{0}(x, m) -  {\pi}_\phi(x, m) s_2.
\end{equation}
\end{theorem}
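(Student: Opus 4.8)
The plan is to treat the discretized instrument $W := \phi(Z)$ as an ordinary \emph{discrete} instrument and to reduce the claim to the non-parametric Manski-type bounds for the CATE under a discrete IV (the extension of \citep{Manski.1990} used in \citep{schweisthal2024meta-learners}). Concretely, I would first show that all the IV structure needed for those bounds survives the push-forward $Z \mapsto \phi(Z)$, then apply the per-value decomposition of the counterfactual mean, and finally identify the abstract quantities $\pi_\phi,\mu_\phi^a$ appearing in the statement with the within-cluster conditional propensity and response. As a first step I would establish $Y(a) \indep \phi(Z) \mid X$ for $a\in\{0,1\}$: Exclusion (Assumption~\ref{ass:exclsion}) gives $Y(a)\indep Z\mid (X,U)$, Independence (Assumption~\ref{ass:independence}) gives $U\indep Z\mid X$, and marginalizing over $U$ yields $Y(a)\indep Z\mid X$; applying $\phi$ gives $Y(a)\indep\phi(Z)\mid X$. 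Hence $\mathbb{E}[Y(a)\mid X=x,\phi(Z)=\ell]=\mathbb{E}[Y(a)\mid X=x]$ for every label $\ell$, so any bound derived conditionally on a cluster is automatically a bound on the unconditional potential-outcome mean.

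\textbf{Per-cluster Manski bounds.} Fixing $x$ and a label $\ell$, the law of total expectation conditional on $A$ gives
\begin{equation}
\mathbb{E}[Y(1)\mid X=x,\phi(Z)=\ell] = \pi_\ell\, \mathbb{E}[Y\mid X=x,A=1,\phi(Z)=\ell] + (1-\pi_\ell)\,\mathbb{E}[Y(1)\mid X=x,A=0,\phi(Z)=\ell],
\end{equation}
with $\pi_\ell := \mathbb{P}(A=1\mid X=x,\phi(Z)=\ell)$, where I used Consistency (Assumption~\ref{ass:consistency}) on the $A=1$ term. The unidentified term $\mathbb{E}[Y(1)\mid X=x,A=0,\phi(Z)=\ell]$ is only known to lie in $[s_1,s_2]$; substituting $s_2$ and $s_1$ yields a valid upper and lower bound on $\mathbb{E}[Y(1)\mid X=x]$. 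The mirror argument (conditioning the other way on $A$) bounds $\mathbb{E}[Y(0)\mid X=x]$.

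\textbf{Assembling the CATE bounds.} Writing $\tau(x)=\mathbb{E}[Y(1)\mid X=x]-\mathbb{E}[Y(0)\mid X=x]$, I would upper-bound $\tau(x)$ by adding the upper bound on $\mathbb{E}[Y(1)\mid X=x]$ at an arbitrary cluster $l$ to the negative of the lower bound on $\mathbb{E}[Y(0)\mid X=x]$ at an arbitrary cluster $m$; after renaming $\pi_\ell$ and the within-cluster response means to $\pi_\phi$ and $\mu_\phi^a$, this is exactly $b^+_{\phi;l,m}(x)$. Since the inequality holds simultaneously for \emph{every} pair $(l,m)$, the tightest valid choice is the minimum, giving $b_\phi^+(x)=\min_{l,m}b^+_{\phi;l,m}(x)$; the lower bound $b_\phi^-$ follows from the symmetric construction with $s_1,s_2$ interchanged and a maximum over $(l,m)$.

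\textbf{Identifying $\pi_\phi,\mu_\phi^a$ and the main obstacle.} It remains to check that the two integral expressions in the statement equal the within-cluster quantities used above, namely $\pi_\phi(x,\ell)=\mathbb{P}(A=1\mid X=x,\phi(Z)=\ell)$ and $\mu_\phi^a(x,\ell)=\mathbb{E}[Y\mid X=x,A=a,\phi(Z)=\ell]$. Writing each conditional expectation as a density-weighted integral over the fibre $\{z:\phi(z)=\ell\}$ and invoking Independence to replace $p(z\mid X=x)$ by the marginal $p(z)$, both numerator and normalizing denominator should match the given integrals term by term, the factor $\mathbb{P}(\phi(Z)=\ell\mid Z=z)$ encoding cluster membership (reducing to $\mathbbm{1}[\phi(z)=\ell]$ for a hard partition). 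I expect \emph{this} to be the main obstacle: it is where $Z\indep X$ is indispensable, and one must verify carefully that the propensity weighting inside $\mu_\phi^a$ reconciles with the $X$-conditional within-cluster weighting so that the normalization reproduces exactly $\mathbb{E}[Y\mid X=x,A=a,\phi(Z)=\ell]$ rather than an $X$-averaged surrogate.
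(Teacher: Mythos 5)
Your proposal is correct and follows essentially the same route as the paper: the paper likewise treats $\phi(Z)$ as a valid discrete instrument, invokes the closed-form discrete-IV bounds (imported as a lemma from prior work, whereas you re-derive them via the law of total expectation and consistency), and then identifies $\mu_\phi^a$ and $\pi_\phi$ with $\mathbb{E}[Y\mid X=x,A=a,\phi(Z)=\ell]$ and $\mathbb{P}(A=1\mid X=x,\phi(Z)=\ell)$ by exactly the Bayes-rule computation you anticipate, using $Z\indep X$ and the fact that $\mathbb{P}(\phi(Z)=\ell\mid A=a,Z=z)=\mathbb{P}(\phi(Z)=\ell\mid Z=z)$. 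The step you flag as the "main obstacle" is indeed where the paper spends its effort, but it is a routine change of measure and poses no real difficulty.
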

\begin{proof}
See Appendix~\ref{app:proofs}. 
\end{proof}

Theorem~\ref{thrm:bounds_phi} states that, in population, we yield valid closed-form bounds for $\tau(x)$ for arbitrary representations $\phi$. In particular, we can relax the optimization problem from Eq.~\eqref{eq:objective_population} and obtain valid bounds $b_\bound{\phi^\ast}^+(X) \geq b_\ast^+(X)$ and $b_\bound{\phi^\ast}^-(X) \leq b_\ast^-(X)$ by solving
\begin{equation}\label{eq:objective_population_repr}
    \phi^\ast \in \underset{\phi \in \Phi}{\arg\min} \; \mathbb{E}_X[b_\bound{\phi}^+(X) - b_\bound{\phi}^-(X)].
\end{equation}
Here, we highlight the dependence of variables on the representation $\bound{\phi}$ in \bound{blue} to show the differences to Eq.~\eqref{eq:objective_population}. Note the following differences: In contrast to Eq.~\eqref{eq:objective_population}, we do not impose any validity constraints in Eq.~\eqref{eq:objective_population_repr} because Theorem~\ref{thrm:bounds_phi} automatically ensures the validity of our bounds. Furthermore, in contrast to Eq.~\eqref{eq:objective_population}, the objective from Eq.~\eqref{eq:objective_population_repr} only depends on identifiable quantities that can be estimated from observational data. 

\textbf{Implications of Theorem \ref{thrm:bounds_phi}:} Our derivation of closed-forms bounds for arbitrary discrete representations of complex $Z$ comes with an important additional benefit: The bounds only depend on (i) discrete probabilities, (ii) quantities which are independent of $\phi$ and thus do not change for different $\phi$, and (iii) the discrete representation mapping to be learned itself. As a result, this allows us to \emph{directly} learn $\phi$ wrt. Eq.~\eqref{eq:objective_population_repr}. As such, we circumvent the need for adversarial or alternating training, which results in more robust estimation. 

\subsection{Finite-sample view}
\label{sec:finite-sample_view}

In practice, we have to estimate the bounds from Theorem~\ref{thrm:bounds_phi} from finite observational data. For this purpose, we start with arbitrary initial estimators: $\hat{\pi}(x, z)$ is the estimator of the propensity score ${\pi}(x, z)$, $\hat{\mu}^a(x, z)$ of the response function ${\mu}^a(x, z)$, and $\hat{\eta}(z)$ of ${\eta}(z) = \mathbb{P}(A = 1 \mid Z = z)$. 

Once the initial estimators are obtained, we can estimate our second-stage nuisance functions defined in Eq.~\eqref{eq:mu_transform} and \eqref{eq:pi_transform} via
\begin{equation}\label{eq:estimates_nuisance_mu}
    \hat{\mu}_{\phi}^{a}(x, \ell) = \frac{1}{\sum_{j=1}^{n} \mathds{1}\{{\phi}(z_j)=\ell, a_j=a\}} \sum_{j=1}^{n}\hat{\mu}^{a}(x, z_j)   \mathds{1}\{{\phi}(z_j)=\ell\} (a\hat{\eta}(z_j)+(1-a)(1-\hat{\eta}(z_j)))
\end{equation}
and via
\begin{equation}\label{eq:estimates_nuisance_phi}
    \hat{\pi}_{\phi}(x, \ell)  = \frac{1} {\sum_{j=1}^{n} \mathds{1}\{\phi(z_j)=\ell\}}  \sum_j^{n}  \hat{\pi}(x, z_j)  \mathds{1}\{\phi(z_j)=\ell\}.
\end{equation}
Finally, we can directly `plug in' these estimators into Eq.~\eqref{eq:bounds_phi} to compute estimates of the upper and lower bound $\hat{b}_{\phi}^-(x), \hat{b}_{\phi}^+(x)$.

A na{\"i}ve approach would now directly use $(\hat{b}_{\phi}^-(x), \hat{b}_{\phi}^+(x))$ to solve the optimization in Eq.~\eqref{eq:objective_population_repr}. However, for finite samples, it turns out this is infeasible without restricting the complexity of the representation function. The reason is outlined in the following theoretical results.

\begin{lemma}[Tightness-bias-variance tradeoff]\label{lem:error_decomposition}
Let $\mathbb{E}_n$ and $\mathup{Var}_n$  denote the expectation and variance with respect to the observational data (of size $n$). Then, it holds
\begin{equation}
    \mathbb{E}_n \Big[\left(b_\ast^+(x) - \hat{b}_{\phi^\ast}^+(x)\right)^2\Big] \leq 2 \Big( \underbrace{\left(b_\ast^+(x) - {b}_{\phi^\ast}^+(x)\right)^2}_{\text{(i) Population tightness}} + \underbrace{\mathbb{E}_n\left[b_{\phi^\ast}^+(x) - \hat{b}_{\phi^\ast}^+(x)\right]^2}_{\text{(ii) Estimation bias}} + \underbrace{\mathup{Var}_n(\hat{b}_{\phi^\ast}^+(x))}_{\text{(iii) Estimation variance}} \Big).
\end{equation}
\end{lemma}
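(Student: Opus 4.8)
The plan is to prove the bound by an \emph{insert-and-split} argument followed by the elementary bias--variance identity. The crucial preliminary observation is that $\phi^\ast$ is a fixed population object (the minimizer of Eq.~\eqref{eq:objective_population_repr}), so both $b_\ast^+(x)$ and $b_{\phi^\ast}^+(x)$ are non-random, and all stochasticity in $\hat b_{\phi^\ast}^+(x)$ enters only through the finite-sample nuisance estimators. I would state this explicitly at the outset, since it is what licenses every subsequent step.

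First, I would insert the population bound $b_{\phi^\ast}^+(x)$ as an intermediate quantity, writing the total error as a population part plus an estimation part,
\begin{equation*}
b_\ast^+(x) - \hat b_{\phi^\ast}^+(x) = \big(b_\ast^+(x) - b_{\phi^\ast}^+(x)\big) + \big(b_{\phi^\ast}^+(x) - \hat b_{\phi^\ast}^+(x)\big),
\end{equation*}
and then apply the elementary inequality $(p+q)^2 \le 2(p^2+q^2)$ and take the expectation $\mathbb{E}_n$ over the data. Because $b_\ast^+(x)$ and $b_{\phi^\ast}^+(x)$ are deterministic, the first resulting term can be pulled outside $\mathbb{E}_n$, yielding
\begin{equation*}
\mathbb{E}_n\big[(b_\ast^+(x) - \hat b_{\phi^\ast}^+(x))^2\big] \le 2\big(b_\ast^+(x) - b_{\phi^\ast}^+(x)\big)^2 + 2\,\mathbb{E}_n\big[(b_{\phi^\ast}^+(x) - \hat b_{\phi^\ast}^+(x))^2\big].
\end{equation*}
The leading term is exactly the population tightness (i).

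Second, I would apply the standard bias--variance decomposition to the remaining mean-squared error. Treating $b_{\phi^\ast}^+(x)$ as the fixed target and $\hat b_{\phi^\ast}^+(x)$ as the random estimator, and using $\mathup{Var}_n(b_{\phi^\ast}^+(x) - \hat b_{\phi^\ast}^+(x)) = \mathup{Var}_n(\hat b_{\phi^\ast}^+(x))$, gives
\begin{equation*}
\mathbb{E}_n\big[(b_{\phi^\ast}^+(x) - \hat b_{\phi^\ast}^+(x))^2\big] = \mathbb{E}_n\big[b_{\phi^\ast}^+(x) - \hat b_{\phi^\ast}^+(x)\big]^2 + \mathup{Var}_n(\hat b_{\phi^\ast}^+(x)),
\end{equation*}
whose two summands are precisely the estimation bias (ii) and the estimation variance (iii). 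Substituting this back produces the claimed inequality with the common factor $2$.

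The computation itself is routine; the only point genuinely requiring care is the status of $\phi^\ast$, which I expect to be the main (conceptual rather than technical) obstacle. The decomposition is clean precisely because $\phi^\ast$ is held fixed at its population optimum, so that $b_{\phi^\ast}^+(x)$ is a non-random target against which the plug-in estimator is measured; if one instead allowed $\phi^\ast$ to be re-estimated from the same data, an additional representation-learning error would appear and the bias--variance split would no longer be exact. I would therefore be explicit that the lemma isolates the estimation error \emph{given} the population-optimal representation. The argument for the lower bound $b_\ast^-(x)$ is identical up to sign, so I would note that this case follows by the same reasoning.
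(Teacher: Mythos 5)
Your proof is correct and follows exactly the paper's argument: insert $b_{\phi^\ast}^+(x)$ as an intermediate term, apply $(p+q)^2 \le 2(p^2+q^2)$, and then use the standard bias--variance decomposition on the remaining mean-squared error. Your additional remark that $\phi^\ast$ and hence $b_{\phi^\ast}^+(x)$ are fixed population quantities is a helpful clarification the paper leaves implicit, but it does not change the route.
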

\begin{proof}
    See Appendix~\ref{app:proofs}.
\end{proof}

\textbf{Interpretation of Lemma~\ref{lem:error_decomposition}:} Lemma~\ref{lem:error_decomposition} shows that the mean squared error (MSE) between the estimated representation-based bound $\hat{b}_{\phi^\ast}^+(x)$ and the ground-truth optimal bound $b_\ast^+(x)$ can be decomposed into the following three components: (i)~\emph{population tightness}, (ii)~\emph{estimation bias}, and (iii)~\emph{estimation variance}. $\bullet$\,Term~(i) describes the discrepancy between the tightest achievable representation-based bound ${b}_{\phi^\ast}^+$ and the ground-truth bound $b_\ast^+(x)$. It will \emph{decrease} if we allow for more complex representations $\Phi$, for example by increasing the number of partitions $k$. $\bullet$\,Term~(ii) describes the estimation bias due to using finite-sample estimators for estimating the bounds. It will generally depend on the type of estimators we employ for $\hat{\pi}(x, z)$, $\hat{\mu}^a(x, z)$, and $\hat{\eta}(z)$. $\bullet$\,Finally, term (iii) characterizes the variance due to using finite-sample estimators. In contrast to term~(i), it will \emph{increase} when we allow the representation to be more complex.

To make point (iii) more explicit, we derive the asymptotic distributions of the estimators from Eq.~\eqref{eq:estimates_nuisance_mu} and Eq.~\eqref{eq:estimates_nuisance_phi} that are used to estimate the final bounds.

\begin{theorem}[Asymptotic distributions of estimators]\label{thrm:variances}
It holds that
\begin{align}
\sqrt{n} \hat{\mu}_{\phi}^{a}(x, \ell) & \xrightarrow{d} \mathcal{N}\left(\mu_{\phi}^{a}(x, \ell), \frac{1}{p_{\ell, \Phi}}\left(\frac{\operatorname{Var}(g(Z) \mid \Phi(Z) = \ell)}{c} + d \right) \right)  \\  
\sqrt{n} \hat{\pi}_{\phi}(x, \ell) & \xrightarrow{d} \mathcal{N}\left({\pi}_{\phi}(x, \ell), \frac{1}{p_{\ell, \Phi}}\operatorname{Var}(h(Z) \mid \Phi(Z) = \ell) \right)
\end{align}    
for $c, d > 0$ and where $p_{\ell, \Phi} = \mathbb{P}(\Phi(Z) = \ell)$,  $g(Z) = \hat{\mu}^{a}(x, Z) (a \hat{\eta}(Z) + (1 - a)(1 - \hat{\eta}(Z))$, and $h(Z) = \hat{\pi}(x, Z)$.
\end{theorem}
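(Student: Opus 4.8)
The plan is to read both estimators as ratios of independent, identically distributed sample averages and to extract their limiting Gaussian laws by combining the multivariate central limit theorem with the delta method. Throughout I treat the first-stage estimators $\hat\mu^a$, $\hat\eta$, $\hat\pi$ --- equivalently the plug-in functions $g(\cdot)$ and $h(\cdot)$ in the statement --- as fixed functions of $z$; this is the cross-fitting regime, where the second stage conditions on an independent data fold, so the only remaining randomness is the i.i.d. sample $\{(Z_j, A_j)\}_{j=1}^n$ and the summands $g(Z_j)\mathds{1}\{\Phi(Z_j)=\ell\}$ and $\mathds{1}\{\Phi(Z_j)=\ell, A_j=a\}$ are i.i.d. I read the displayed convergence in the usual centered-and-scaled sense, i.e. $\sqrt{n}(\hat\mu_\phi^a(x,\ell) - \mu_\phi^a(x,\ell)) \xrightarrow{d} \mathcal{N}(0, \cdot)$ and likewise for $\hat\pi_\phi$.

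First I would pin down the limiting mean. Writing $\hat\pi_\phi(x,\ell) = \bar M / \bar N$ with $\bar M = \tfrac1n\sum_j h(Z_j)\mathds{1}\{\Phi(Z_j)=\ell\}$ and $\bar N = \tfrac1n\sum_j \mathds{1}\{\Phi(Z_j)=\ell\}$, the law of large numbers gives $\bar M \to p_{\ell,\Phi}\,\E[h(Z)\mid\Phi(Z)=\ell]$ and $\bar N \to p_{\ell,\Phi}$, so the continuous mapping theorem sends the ratio to $\E[h(Z)\mid\Phi(Z)=\ell]$, which is exactly $\pi_\phi(x,\ell)$ once its defining integral is rewritten as a conditional expectation using $\mathbb{P}(\Phi(Z)=\ell\mid Z=z)=\mathds{1}\{\Phi(z)=\ell\}$. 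The analogous computation for $\hat\mu_\phi^a$ --- whose denominator now averages $\mathds{1}\{\Phi(Z_j)=\ell, A_j=a\}$ and hence converges to $\mathbb{P}(\Phi(Z)=\ell, A=a)$ --- reproduces the integral defining $\mu_\phi^a$ after substituting $\mathbb{P}(A=a\mid Z=z)=a\eta(z)+(1-a)(1-\eta(z))$. For the covariance structure I would apply the two-dimensional CLT to $(\bar M, \bar N)$ and then the delta method with $f(u,v)=u/v$, reading the asymptotic variance off as $\nabla f^\top \Sigma \nabla f$ at the population means.

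The clean case is $\hat\pi_\phi$ (the second display): numerator and denominator carry the \emph{same} indicator $\mathds{1}\{\Phi(Z)=\ell\}$, so the product $WB$ equals $W$ and the covariance and denominator-variance contributions cancel the off-diagonal terms; a short computation collapses $\nabla f^\top\Sigma\nabla f$ to $\operatorname{Var}(h(Z)\mid\Phi(Z)=\ell)/p_{\ell,\Phi}$. The main obstacle is the first display. For $\hat\mu_\phi^a$ the numerator indicator $\mathds{1}\{\Phi(Z)=\ell\}$ and the denominator indicator $\mathds{1}\{\Phi(Z)=\ell, A=a\}$ \emph{differ}, so no cancellation occurs and all three delta-method terms survive. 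Using $\operatorname{Var}(g(Z)\mathds{1}\{\Phi(Z)=\ell\}) = p_{\ell,\Phi}\operatorname{Var}(g\mid\Phi(Z)=\ell) + p_{\ell,\Phi}(1-p_{\ell,\Phi})\,\E[g\mid\Phi(Z)=\ell]^2$, the coefficient of $\operatorname{Var}(g\mid\Phi(Z)=\ell)$ is $p_{\ell,\Phi}/\mathbb{P}(\Phi(Z)=\ell,A=a)^2$, which I would factor as $1/(p_{\ell,\Phi}\,c)$ with $c=(\mathbb{P}(\Phi(Z)=\ell,A=a)/p_{\ell,\Phi})^2\in(0,1]$, collecting the leftover mean-squared, covariance, and denominator-variance pieces into $d/p_{\ell,\Phi}$. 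The delicate point --- where I expect the real work --- is verifying $d>0$, since the residual mixes signs (the covariance term enters negatively); I would express $d$ through $\E[g\mid\Phi(Z)=\ell]$, $\E[g\mid\Phi(Z)=\ell,A=a]$, and the cell probabilities and argue positivity, noting in any case that the claim only asserts \emph{existence} of positive $c,d$ yielding this decomposition, whose qualitative message is the $1/p_{\ell,\Phi}$ blow-up for low-probability cells anticipated by Lemma~\ref{lem:error_decomposition}.
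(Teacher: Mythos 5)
Your proposal is correct and follows essentially the same route as the paper: both write each estimator as a ratio of i.i.d.\ sample averages, apply the bivariate CLT, and use the delta method with $f(s,n)=s/n$, arriving at the same cancellation for $\hat{\pi}_\phi$ and the same identification $c = q_\ell^2$ with $q_\ell = \mathbb{P}(A=a\mid\Phi(Z)=\ell)$ for $\hat{\mu}_\phi^a$. The positivity of $d$ that you flag as the delicate step is resolved in the paper by explicit computation, which collects the residual terms into $d = \theta_\ell^2(1-p_\ell q_\ell)/q_\ell^3$ with $\theta_\ell = \E[g(Z)\mid\Phi(Z)=\ell]$, so no sign-mixing issue remains.
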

\begin{proof}
    See Appendix~\ref{app:proofs}.
\end{proof}

We observe that the variance of the estimators (and, thus, of the estimated bounds) explodes for small values of  $p_{\ell, \Phi} = \mathbb{P}(\Phi(Z) = \ell)$. Hence, to reduce the estimation variance, we aim to learn a representation $\phi$ that avoids low $p_{\ell, \Phi}$ for some $\ell$, e.g., by limiting the number of partitions $k$. 
$\mathbf{\Rightarrow}$ Altogether, as a consequence of Lemma~\ref{lem:error_decomposition} and Theorem~\ref{thrm:variances}, we obtain an \emph{inherent trade-off between tightness of the bounds in populations and estimation variance in finite-samples}.

\textbf{Learning objective for the representation $\phi$:} Due to the inherent trade-off between tightness of the bounds and estimation variance, the aim for learning the representation $\phi$ is two-fold. On the one hand, we \textbf{(a)} aim to learn tight bounds, which is given in the objective in Eq.~\eqref{eq:objective_population_repr}. On the other hand, we \textbf{(b)}~also have to account for controlling the variance in finite-sample settings, especially for high-dimensional $Z$. Motivated by Theorem~\ref{thrm:variances}, we ensure $\hat{p}_{\ell, \Phi} > \varepsilon$ for some $\varepsilon > 0$, where $\hat{p}_{\ell, \Phi}$ is an estimator of $p_{\ell, \Phi} = \mathbb{P}(\Phi(Z) = \ell)$. Combining both \textbf{(a)} and \textbf{(b)} yields the following objective:
\begin{equation}\label{eq:objective_finite_sample}
    \phi^* \in \underset{\phi \in \Phi}{\arg\min} \; \mathbb{E}_X[\hat{b}_\phi^+(X) - \hat{b}_\phi^-(X)] \quad \text{s.t.} \quad \hat{p}_{\ell, \Phi} > \varepsilon,
\end{equation} 
for some $\varepsilon > 0$ and all $\ell \in \{1,\dots,k\}$. 
We next present a neural method to learn tight bounds using the above objective.

\section{Neural method for learning CATE bounds with complex instruments}
\label{sec:neural_method}

In this section, we propose a neural method for our objective to learn tight and valid bounds. Our method consists of two separate stages (see Algorithm~\ref{alg:learning_full}): \circled{1}~we learn initial estimators of the three nuisance functions, and \circled{2}~we learn an optimal representation $\phi^\ast$, so that the width of the bounds is minimized.  Note that our method is completely model-agnostic. Hence, arbitrary machine learning models can be used in the first and second stages in order to account for the properties of the data. For example, for instruments with gene data, one could use pre-trained encoders to further optimize the downstream performance.  We give an overview of the workflow of our method in Fig.~\ref{fig:architecture} (see Algorithm~\ref{alg:main_algorithm} for pseudocode).

\begin{figure}[h]
\vspace{-0.3cm}
    \centering
    \includegraphics[width=.90\textwidth]{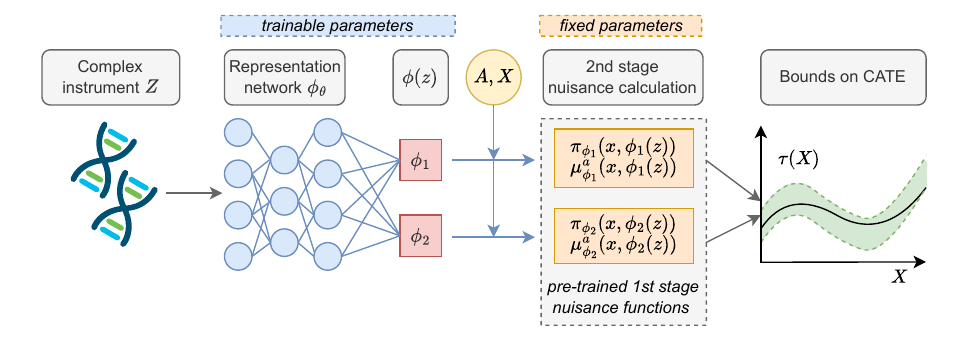}
    \vspace{-0.3cm}
    \caption{Workflow of the second stage of our method for calculating bounds on the CATE: The representation network $\phi_\theta$ learns discrete latent representations of the complex $Z$ (e.g., continuous or high-dimensional). By employing the pre-trained $\hat{\mu}$, $\hat{\pi}$, and $\hat{\eta}$, we can directly calculate the nuisance estimates conditional on the latent representation $\phi(z)$ by using Eq.~\eqref{eq:estimates_nuisance_mu} and Eq.~\eqref{eq:estimates_nuisance_phi} to yield the bounds.}
    \vspace{-0.2cm}
    \label{fig:architecture}
\end{figure}

\circled{1}~\textbf{Initial nuisance estimation:}  In the first stage, we can use arbitrary machine learning models (e.g., feed-forward neural network) to learn the first-stage nuisance functions $\hat{\mu}^{a}(x, z) = \hat{\E}[Y \mid X = x, A = a, Z=z]$ , 
$\hat{\pi}(x, z) = \hat{\prob}(A = 1 \mid X = x, Z=z)$, and
$\hat{\eta}(z) = \hat{\prob}(A=1 \mid Z=z)$.

Recall that we consider $Z$ and $X$, which are both potentially high-dimensional. Hence, for  $\hat{\mu}^{a}(x, z)$ and $\hat{\pi}(x, z)$, we use network architectures that have (i) different encoding layers for $X$ and $Z$, so that we capture structured information within the variables and (ii) shared layers on top of the encoding to learn common structures. Further, for $\hat{\mu}^{a}(x, z)$, we use two outcome heads for both treatment options $A \in \{0, 1\}$ to ensure that the influence of the treatment on the outcome prediction does not `get lost' in the high-dimensional space of $X$ and $Z$ \citep{Shalit.2017}. 

\begin{algorithm}\label{alg:main_algorithm}
\DontPrintSemicolon
\scriptsize
\caption{Two-stage learner for estimating bounds with complex instruments}
\label{alg:learning_full}
\SetKwInOut{Input}{Input}
\SetKwInOut{Output}{Output}
\Input{{observational data sampled from $(Z, X, A, Y)$, epochs $E$, batch size $N_b$, 
neural network ${\phi}_\theta$ with parameters $\theta$}, learning rate $\delta$}
\Output{bounds $\hat{b}_{\phi_\theta}^-(x), \hat{b}_{\phi_\theta}^+(x)$}
\tcp{First stage (nuisance estimation)}
$\hat{\mu}^{a}(x, z) \gets \hat{\E}[Y \mid X = x, A = a, Z=z]$\;
$\hat{\pi}(x, z) \gets \hat{\prob}(A = 1 \mid X = x, Z=z)$\;
$\hat{\eta}(z) \gets \hat{\prob}(A=1 \mid Z=z)$\;
\tcp{Second-stage (partition learning and bound calculation)}

     \For{$e \in \{1, ..., E\}$ in batches}{
     \For{$\ell \in \{1, ..., k\}$}{

   $\hat{\mu}_{\phi_\theta}^{a}(x, \ell) = \frac{N_b}{\sum_j^{N_b} \mathds{1}\{{\phi}_\theta(z_j)=\ell, A=a)\}} \sum_j^{N_b}\hat{\mu}^{a}(x, z_j)   \mathds{1}\{{\phi}_\theta(z_j)=\ell\} (a\hat{\eta}(z_j)+(1-a)(1-\hat{\eta}(z_j)))$ \; 
 $\hat{\pi}_{\phi_\theta}(x, \ell)  = \frac{N_b} {\sum_j^{N_b} \mathds{1}\{\phi_\theta(z_j)=\ell\}}  \sum_j^{N_b}  \hat{\pi}(x, z_j)  \mathds{1}\{\phi_\theta(z_j)=\ell\})$ \;
    }
     $ \hat{b}_{\phi_\theta}^+(x) = \min_{l, m} \hat{b}^+_{{\phi_\theta}; l, m}(x) ,  \quad \hat{b}_{\phi_\theta}^-(x) = \max_{l, m} \hat{b}^-_{{\phi_\theta};l, m}(x)$ for $l, m \in \{1, ..., K\}$ \; 
     % \times \{1, ..., K\}}

     % $\mathcal{L} = \frac{1}{N_b}\sum_{i=1}^{N_b}\hat{b}^+_i - \hat{b}^-_i + $ heteroegneity loss of $\phi_\theta(Z) +$ regularization terms (diff in rel. freq. of $\phi_\theta(Z))$; maybe also prop score and dist $X|Z$) \;
     $\mathcal{L}(\theta) \gets \mathcal{L}_\mathup{b}(\theta) + \lambda \mathcal{L}_\mathup{reg}(\theta) + \gamma \mathcal{L}_\mathup{aux}(\theta)$ as per Sec.~\ref{sec:neural_method}\;
     $\theta \gets \theta - \delta\nabla_{\theta} \mathcal{L(\theta)}$ \;
     }

\tcp{Final bounds}
\textbf{return} $\hat{b}_{\phi_\theta}^-(x), \hat{b}_{\phi_\theta}^+(x)$ \;
\end{algorithm}
\circled{2}~\textbf{Representation learning:} In the second stage, we train a neural network to learn discrete representations of the instruments with the objective of obtaining tight bounds but with constraints on the estimation variance. To learn the function $\phi(z)$, we use a neural network $\phi_\theta$ with trainable parameters $\theta$. Then, on top of the final layer of the encoder, we leverage the Gumbel-softmax trick \citep{jang2016categorical}, which allows us to learn $k$ \emph{discrete} representations of the latent space of the instruments, where $k$ can be flexibly chosen as a hyperparameter. 

\textbf{Custom loss function:} We further transform our objective into a loss function to train the network $\phi_\theta$. For that, we design a compositional loss consisting of three terms: 

\circledgray{1}~A \emph{bound-width minimization loss} that aims at our objective in Eq.~\eqref{eq:objective_finite_sample}, defined via
\begin{equation}
\mathcal{L}_\mathup{b}(\theta) = \frac{1}{n}\sum_{i=1}^{n}\hat{b}_{\phi_\theta}^+(x_i) - \hat{b}_{\phi_\theta}^-(x_i)    
\end{equation}
\circledgray{2}~A \emph{regularization loss} to enforce the constraints in Eq.~\eqref{eq:objective_finite_sample}, i.e., enforcing that $\hat{p}_{\ell, \Phi} =\hat{\mathbb{P}}(\phi_\theta(Z)=\ell) > \varepsilon$, $\forall \ell \in {1, \ldots ,k}, \text{ for some } \varepsilon > 0$. For that, we aim to penalize the negative log-likelihood $- \sum_{j=1}^k \log(\mathbb{P}({\phi_\theta}(Z)=j))$, which we can estimate via 
\begin{equation}
\mathcal{L}_\mathup{reg}(\theta)  {=} - \sum_{j=1}^k \log\Big(\frac{1}{n}\sum_{i=1}^{n}  \mathds{1}\{{\phi_\theta}(z_i)=j\}\Big).    
\end{equation}

\circledgray{3}~An \emph{auxiliary guidance loss} $\mathcal{L}_\mathup{aux}(\theta)$, which enforces more heterogeneity between $\mathbb{P}(Z \mid \phi_\theta(Z)=l)$ and $\mathbb{P}(Z \mid \phi_\theta(Z)=m)$, for all $l, m$. We implement this by adding an additional linear classification head on top of the last hidden layer of $\phi_\theta$ before the discretization and then calculating the cross entropy loss between the predictions of the head and the discrete representation $\phi_\theta(z)$. In principle, the term $\mathcal{L}_\mathup{aux}(\theta)$ is not needed for our objective, but we empirically observed that it helps to stabilize the training further by avoiding to `get stuck' in non-informative local minima. Hence, we yield our final training loss 
\begin{equation}\label{eq:loss_all}
   \mathcal{L}(\theta) = \mathcal{L}_\mathup{b}(\theta) + \lambda \mathcal{L}_\mathup{reg}(\theta) + \gamma \mathcal{L}_\mathup{aux}(\theta),
\end{equation}
with hyperparameters $\lambda$ and $\gamma$. Here, $\lambda$ controls the trade-off between bound tightness and estimation variance, and can thus be tailored depending on the application. The hyperparameter $\gamma$ can be simply tuned as usual.

A benefit of our custom loss is that it is particularly efficient and robust compared to other learning procedures (such as alternating learning procedure or adversarial training). In the second stage, we solely update the parameters $\theta$ of the discretization network $\phi_\theta$ to minimize $\mathcal{L_\theta}$. In contrast, the networks of the first-stage nuisance estimators have frozen weights. In the second stage, networks of the first-stage nuisance estimators are only evaluated but are \emph{not} updated. 

\section{Experiments}\label{sec:experiments}
\textbf{Baselines:} Existing methods (see Sec.~\ref{sec:related_work}) focus either on (a)~point identification with strong assumptions, (b)~partial identification with continuous treatment variables, or (c)~discrete instruments. We instead focus on a setting with complex instruments and binary treatments. Hence, existing methods are \textbf{not} tailored to our setting, because of which a fair comparison is precluded. Instead, we thus demonstrate the validity and tightness of our bounds. Further, for comparison, we propose an additional \textsc{Na{\"i}ve} baseline, which first learns a discretization of the instruments (via $k$-means clustering) and then learns the nuisance functions wrt. to the discretized instruments to apply the existing bounds for discrete instruments from Lemma~\ref{lemma:bounds_discrete} on top. 

\begin{wrapfigure}[15]{r}{0.7\textwidth}
\vspace{-0.2cm}
\centering
\begin{subfigure}
  \centering
  \includegraphics[height=.33\linewidth]{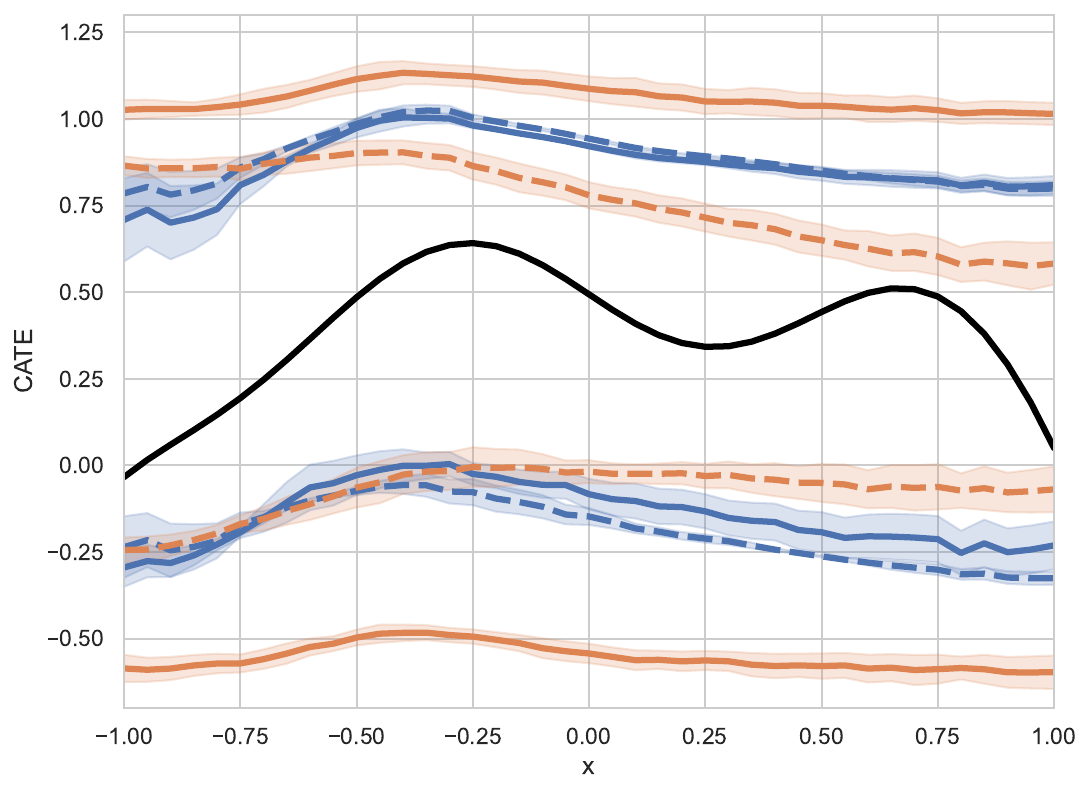}
\end{subfigure}%
% \hfill
\begin{subfigure}
  \centering
\includegraphics[height=.33\linewidth]{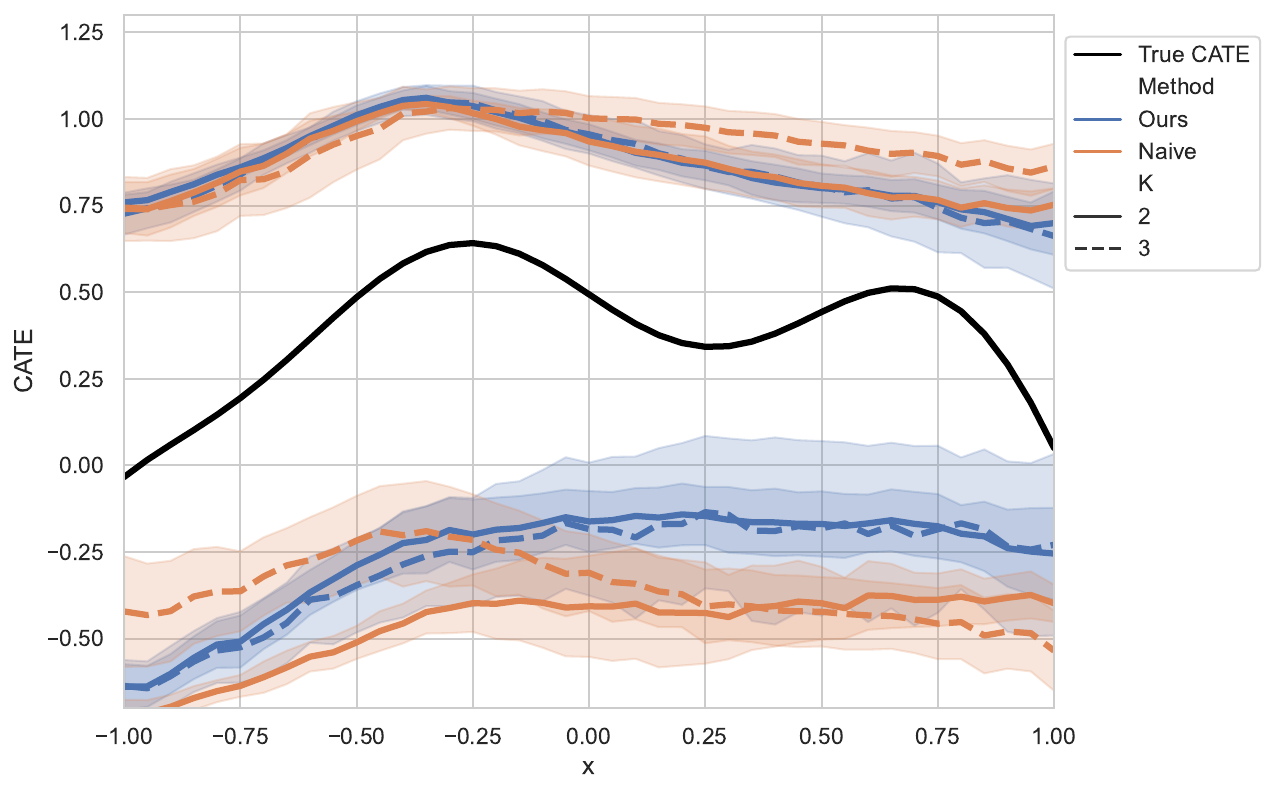}
\end{subfigure}% 
\vspace{-0.2cm}
\caption{\textbf{Datasets 1 and 2: Estimated bounds on the CATE.} Shown: mean $\pm$ sd over 5 runs for different number of discretizations $k$. \emph{Left}: Dataset 1 with a  simple $\pi(x, z)$. \emph{Right}: Dataset 2 with a complex $\pi(x, z)$.}
\label{fig:results_synth}
\end{wrapfigure}
\textbf{Data:} We perform experiments mimicking Mendelian Randomization but where we simulate the data to have access to the ground-truth CATE for performance evaluations, so that we can check for coverage and validity of the bounds. We consider three different realistic settings. For Datasets 1 and 2, we consider a one-dimensional continuous instrument representing a polygenic risk score \citep{pierce2018mendelian}. Further, in Dataset~1, we model the true $\pi(x, z)$ as a rather simple function to check if our method is already competitive in such settings. In Dataset~2, we model $\pi(x, z)$ as a complex function to evaluate the performance in more challenging settings.  We use the same CATE for Dataset~1 and Dataset~2 to allow for comparisons between both. In Dataset 3, we model high-dimensional instruments with single nucleotide polymorphisms \cite[SNPs, i.e., genetic variants;][]{burgess2020robust} to test our method in an additional realistic and even more complex setting. In all datasets, we model the CATE to be heterogeneously conditioned on $X$ to check whether the bounds adapt to different subpopulations. Details are in Appendix~\ref{appendix:data}.

\textbf{Performance metrics:}  We report the following metrics to assess the validity and robustness of the estimated bounds: (i)~The \emph{coverage}, i.e., how often the true CATE lies within the estimated bounds. (ii)~The average \emph{width} of bounds, where lower values indicate more informative bounds. (iii)~The \emph{mean squared difference} MSD($k$) of the predicted bounds over different values of $k$, indicating the robustness wrt. to the selection of the hyperparameter. Further, for Dataset 3, we model $\pi(x, z)$ to be dependent on some latent discrete representation of the observed $Z$, such that we can approximate oracle bounds. Thus, we can evaluate the (iv) MSE and (v) the coverage wrt. to the oracle bounds.

\begin{table}[h]
% \vspace{-0.3cm}
\centering
\scriptsize
\begin{tabular}{llcccc}
\toprule
Dataset & Method & $k$ & Coverage[$\uparrow$] & Width[$\downarrow$] & MSD($k$)[$\downarrow$] \\
\midrule
\multirow{4}{*}{Dataset 1}
    & Na{\"i}ve  & 2 & $\textbf{1.00} \pm 0.00$ & $1.62 \pm 0.06$ & \multirow{2}{*}{$0.28 \pm 0.06$}  \\
    & Na{\"i}ve  & 3 & $\textbf{1.00} \pm 0.00$ & $\textbf{0.83} \pm 0.16$ &  \\
    \cline{2-6}
    & Ours       & 2 & $\textbf{1.00} \pm 0.00$ & $\textbf{1.01} \pm 0.05$ & \multirow{2}{*}{$\textbf{0.03} \pm 0.03$}  \\
    & Ours       & 3 & $\textbf{1.00} \pm 0.00$ & $1.09 \pm 0.04$ & \\
\midrule
\midrule
\multirow{4}{*}{Dataset 2}
    & Na{\"i}ve  & 2 & $\textbf{1.00} \pm 0.00$ & $1.34 \pm 0.19$ & \multirow{2}{*}{$0.09 \pm 0.06$} \\
    & Na{\"i}ve  & 3 & $\textbf{1.00} \pm 0.00$ & $1.28 \pm 0.20$ & \\
    \cline{2-6}
    & Ours       & 2 & $\textbf{1.00} \pm 0.00$ & $\textbf{1.13} \pm 0.19$ & \multirow{2}{*}{$\textbf{0.06} \pm 0.06$} \\
    & Ours       & 3 & $\textbf{1.00} \pm 0.00$ & $\textbf{1.15} \pm 0.31$ &  \\
\bottomrule
\end{tabular}

\caption{\textbf{Datasets 1 and 2: Comparison of the \textsc{Na{\"i}ve} baseline and our method.} Reported are the coverage, width, and mean squared distance MSD($k$) between the estimated bounds over 5 runs. Best results in bold. }
\label{tab:results_synth}
% \vspace{-0.1cm}
\end{table}
\textbf{Implementation details:} For our method, we use multi-layer-perceptrons (MLPs) for the first-stage nuisance estimation and an MLP with Gumbel-softmax \citep{jang2016categorical} discretization on the last layer for learning $\phi_\theta$. For the \textsc{Na{\"i}ve} baseline, we use $k$-means clustering in the first step to learn discretized instruments and then use MLPs with identical architecture for the nuisance estimation to ensure a fair comparison. We provide further details in Appendix~\ref{sec:app_implementation}.
     
\begin{table}[h]  
% \vspace{-0.1cm}
\centering
\scriptsize
    \begin{tabular}{ll|p{1.5cm}p{1.5cm}|p{1.5cm}p{1.5cm}|p{1.5cm}}
        \toprule
        Method & $k$ &  Coverage[$\uparrow$] & Width[$\downarrow$] & Coverage (oracle bounds)[$\uparrow$] & MSE (oracle bounds)[$\downarrow$] &  MSD($k$)[$\downarrow$]\\
        \midrule
        Naive & 2 & $1.00 \pm 0.00$ & $1.96 \pm 0.05$ & $1.00 \pm 0.00$ & $0.15 \pm 0.02$ & \multirow{4}{*}{$0.10 \pm 0.10$}\\ 
              & 4 & $1.00 \pm 0.00$ & $1.91 \pm 0.03$ & $1.00 \pm 0.00$ & $0.13 \pm 0.02$ & \\
              & 6 & $1.00 \pm 0.00$ & $1.74 \pm 0.26$ & $0.75 \pm 0.50$ & $0.09 \pm 0.05$ & \\
              & 8 & $1.00 \pm 0.00$ & $1.89 \pm 0.09$ & $1.00 \pm 0.00$ & $0.12 \pm 0.04$ & \\
        \midrule
        Ours  & 2 & $1.00 \pm 0.00$ & $1.87 \pm 0.05$ & $1.00 \pm 0.00$ & $0.12 \pm 0.02$ & \multirow{4}{*}{$\textbf{0.03} \pm 0.02$} \\
              & 4 & $1.00 \pm 0.00$ & $1.87 \pm 0.08$ & $1.00 \pm 0.00$ & $0.12 \pm 0.03$ & \\
              & 6 & $1.00 \pm 0.00$ & $1.85 \pm 0.06$ & $1.00 \pm 0.00$ & $0.11 \pm 0.02$ & \\
              & 8 & $1.00 \pm 0.00$ & $1.83 \pm 0.07$ & $0.99 \pm 0.01$ & $0.11 \pm 0.03$ & \\
        \bottomrule
    \end{tabular}
    \vspace{-0.2cm}
\caption{\textbf{Dataset 3 (high-dimensional): Comparison of the \textsc{Na{\"i}ve} baseline and our method.} Reported are the coverage, width, the coverage and MSE wrt. to the oracle bounds, and the mean squared distance MSD($k$) between the estimated bounds over $5$ runs.}\label{tab:DRest}
% \vspace{-0.1cm}
\end{table}
    
% \begin{figure}[h]
\begin{wrapfigure}[15]{r}{0.5\textwidth}
  \vspace{-0.2cm}
        \centering
    \includegraphics[width=.9\linewidth]{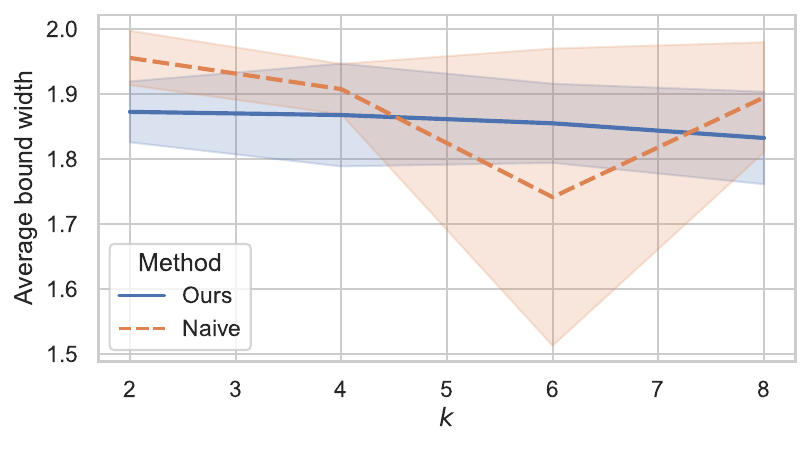}
                \vspace{-0.35cm}
        \caption{\textbf{Dataset 3 (high-dimensional): Average bound width.} Sensitivity analysis wrt. to the number of instruments $k$ where we show the average bound width and estimation variance over 5 runs.}
        \label{fig:bound_width3}
\vspace{-0.1cm}
\end{wrapfigure}
% \end{figure}

\textbf{Results:} We present the results of our experiments for Datasets 1 and 2 in Table~\ref{tab:results_synth} and Fig.~\ref{fig:results_synth}, and for Dataset 3 in Table~\ref{tab:DRest} and Fig.~\ref{fig:bound_width3}. Here, we compare our method against the \textsc{Na{\"i}ve} baseline and vary the number of clusters $k$, with higher $k$ for the more complex Dataset 3. 

Overall, we observe the following patterns: \textbf{(i)}~Both methods (i.e., ours and the \textsc{Na{\"i}ve} baseline) almost always reach a perfect coverage of 100\%, which shows the validity of the bounds. \textbf{(ii)}~As expected, our method is able to learn \emph{tighter bounds} on the more complex Datasets~2 and 3 over the different runs and for different $k$. This demonstrates that our method aimed at learning representations with the objective of tightening bounds can clearly improve over a discretization that uses solely information of $Z$ in the first step (\textsc{Na{\"i}ve}). \textbf{(iii)}~Our method is robust, especially also over different values of $k$. This is demonstrated by a low MSD($k$), in particular for Datasets~1 and 3. Yet, this is unlike the \textsc{Na{\"i}ve} baseline, which performs \emph{not} robust wrt. $k$. The latter is also clearly visible by the large differences in the learned bounds in Fig.~\ref{fig:results_synth} on the left. Further, Fig.~\ref{fig:bound_width3} indicates unstable behavior of the \textsc{Na{\"i}ve} baseline with higher variation, even resulting in learning falsely overconfident bounds for $k=6$. In contrast, our method yields bounds that are robust for given $k$ as well as over varying values of $k$, which is naturally encouraged by our objective of flexibly learning representations while ensuring robustness in bound estimation.

\textbf{\underline{Takeaways:}} Our method can successfully learn bounds that have a high coverage and a low width. Further, our method outperforms the \textsc{Na{\"i}ve} baseline clearly while ensuring robustness. Here, our results show that the source of the performance gain is the way how we learn the representation $\phi$ and that the performance gain from our method becomes larger for more complex datasets.

\textbf{\underline{Limitations:}} Our method for partial identification allows us to relax multiple assumptions that are inherent to methods for point identification. Nevertheless, we still rely on the standard assumptions of IV settings. However, such assumptions often hold by design or can be ensured by expert knowledge such as in Mendelian randomization. We provide an extended discussion in Appendix~\ref{app:RW_assumptions}.

\textbf{\underline{Conclusion:}} We propose a novel method for learning tight bounds on treatment effects by making use of complex instruments (e.g., instruments that are continuous, potentially high-dimensional, and that have non-trivial relationships with the treatment intake or exposure). 

\newpage

\bibliography{literature}
\bibliographystyle{iclr2025_conference}

\newpage

\appendix

\section{Proofs}\label{app:proofs}

\subsection{Proof of Theorem~\ref{thrm:bounds_phi}}

We begin by stating a result from the literature that obtains valid bounds for discrete instruments.
\begin{lemma}[\citep{swanson2018partial, schweisthal2024meta-learners}]\label{lemma:bounds_discrete}
Under Assumptions~\ref{ass:consistency} and \ref{ass:exclsion}, the CATE is bounded via
\begin{equation}
b^-(x) \leq \tau(x) \leq b^+(x),
\end{equation}
with
\begin{equation}\label{eq:bounds_discrete}
    b^+(x) = \min_{l, m} b^+_{l, m}(x) \quad \text{and} \quad b^-(x) = \max_{l, m} b^-_{l, m}(x) 
\end{equation}
where
\begin{equation}
    b^+_{l, m}(x) =\pi(x, l) \mu^{1}(x, l) + (1 - \pi(x, l)) s_2   - (1-\pi(x, m)) \mu^{0}(x, m) -  \pi(x, m) s_1 ,
\end{equation}
%and
\vspace{-0.4cm}
\begin{equation}
    b^-_{l, m}(x) =\pi(x, l) \mu^{1}(x, l) + (1 - \pi(x,  l)) s_1   - (1-\pi(x, m)) \mu^{0}(x, m) -  \pi(x, m) s_2.
\end{equation}
\end{lemma}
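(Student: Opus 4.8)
The quantity to bound is $\tau(x)=\mathbb{E}[Y(1)\mid X=x]-\mathbb{E}[Y(0)\mid X=x]$, so the plan is to bound each counterfactual mean $\mathbb{E}[Y(a)\mid X=x]$ from above and below in terms of the observable nuisances $\pi(x,z),\mu^a(x,z)$ evaluated at discrete instrument values, and then subtract. The whole argument hinges on one causal identity, which I would establish first: under the IV assumptions the counterfactual mean is invariant to the instrument, i.e. $\mathbb{E}[Y(a)\mid X=x,Z=z]=\mathbb{E}[Y(a)\mid X=x]$ for every value $z$ in the support. I would prove this by conditioning on the unobserved confounder and integrating it out. The exclusion restriction (Assumption~\ref{ass:exclsion}) gives that, within a fully specified stratum $(X=x,U=u)$, the counterfactual mean $\mathbb{E}[Y(a)\mid X=x,U=u,Z=z]$ does not depend on $z$, since $Z$ influences $Y$ only through $A$, which is held fixed at $a$; independence (Assumption~\ref{ass:independence}), $Z\indep(U,X)$, gives that the mixing law $\mathbb{P}(U=u\mid X=x,Z=z)=\mathbb{P}(U=u\mid X=x)$ is likewise free of $z$. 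Integrating the former against the latter yields the claim.

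Second, I would fix an instrument value $z$ and decompose each counterfactual mean over the treatment strata by the law of total expectation. For $Y(1)$,
\begin{equation}
\mathbb{E}[Y(1)\mid X=x,Z=z]=\pi(x,z)\,\mathbb{E}[Y(1)\mid X=x,Z=z,A=1]+(1-\pi(x,z))\,\mathbb{E}[Y(1)\mid X=x,Z=z,A=0].
\end{equation}
On the treated stratum, consistency (Assumption~\ref{ass:consistency}) identifies the first conditional expectation with the observable $\mu^1(x,z)$, whereas the second is counterfactual and unobserved but, because $Y\in[s_1,s_2]$, lies in $[s_1,s_2]$. This sandwiches $\mathbb{E}[Y(1)\mid X=x,Z=z]$ between $\pi(x,z)\mu^1(x,z)+(1-\pi(x,z))s_1$ and $\pi(x,z)\mu^1(x,z)+(1-\pi(x,z))s_2$; the symmetric decomposition over the untreated stratum bounds $\mathbb{E}[Y(0)\mid X=x,Z=z]$ between $(1-\pi(x,z))\mu^0(x,z)+\pi(x,z)s_1$ and $(1-\pi(x,z))\mu^0(x,z)+\pi(x,z)s_2$.

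Third, I would combine the pieces. By the invariance identity, the left-hand sides of all four inequalities equal the instrument-free means $\mathbb{E}[Y(1)\mid X=x]$ and $\mathbb{E}[Y(0)\mid X=x]$, so every instrument value supplies a valid bound and I may evaluate the $Y(1)$-bounds at an arbitrary $l$ and the $Y(0)$-bounds at an arbitrary $m$. Subtracting the smallest admissible value of $\mathbb{E}[Y(0)\mid X=x]$ (using $s_1$, at $m$) from the largest admissible value of $\mathbb{E}[Y(1)\mid X=x]$ (using $s_2$, at $l$) gives exactly $\tau(x)\le b^+_{l,m}(x)$, and the reverse pairing gives $\tau(x)\ge b^-_{l,m}(x)$. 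Since $l,m$ are arbitrary, intersecting over all pairs yields $\tau(x)\le\min_{l,m}b^+_{l,m}(x)=b^+(x)$ and $\tau(x)\ge\max_{l,m}b^-_{l,m}(x)=b^-(x)$, which is the claim. The main obstacle is the first step: the exclusion restriction as stated constrains the realized outcome $Y=Y(A)$, so extending instrument-invariance to the counterfactual $Y(a)$ on the off-treatment stratum ($A\neq a$) must be handled carefully. This is precisely where the structural content of the IV assumptions—that $Z$ has no pathway to $Y$ except through $A$, together with $Z\indep(U,X)$—is needed, and it is the only genuinely causal (as opposed to arithmetic) part of the proof.
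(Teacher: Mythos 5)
Your proof is correct, but there is nothing in the paper to compare it against: the paper does not prove Lemma~\ref{lemma:bounds_discrete} at all, it imports it from the cited literature and uses it as the black-box starting point for the proof of Theorem~\ref{thrm:bounds_phi}. Your reconstruction is the standard Manski/Robins-style argument --- instrument-invariance of the counterfactual means, decomposition over treatment strata with consistency identifying the on-treatment term and boundedness sandwiching the off-treatment term in $[s_1,s_2]$, then differencing the $Y(1)$-bound at $l$ against the $Y(0)$-bound at $m$ and intersecting over all pairs --- and the arithmetic reproduces $b^\pm_{l,m}(x)$ exactly.

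Two observations. First, your invariance step genuinely uses Assumption~\ref{ass:independence} (via $\mathbb{P}(U=u \mid X=x, Z=z) = \mathbb{P}(U=u \mid X=x)$), whereas the lemma's header lists only Assumptions~\ref{ass:consistency} and~\ref{ass:exclsion}. This is not a defect of your proof but a looseness in the header: invariance of $\mathbb{E}[Y(a) \mid X=x, Z=z]$ in $z$ cannot be obtained from exclusion alone. Second, the caveat you raise at the end is substantive, not pedantic. The formal statement $Z \indep Y(A) \mid (X,A,U)$ constrains $Y(a)$ only on the stratum $A=a$, and the lemma can fail under that reading alone: take $X$ and $U$ trivial, $Z \sim \mathrm{Bernoulli}(1/2)$, $A = Z$, $Y(1) = Z$, $Y(0) = 0$, outcomes in $[0,1]$. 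Conditioning on $A$ pins down $Z$, so the stated exclusion holds degenerately, yet $b^-_{1,0}(x) = 1 > 1/2 = \tau(x)$, violating the bound. So the lemma is true only under the structural reading of exclusion --- the potential outcomes $Y(a)$ have no dependence on $z$, as encoded by the causal graph in Fig.~\ref{fig:setting} --- which is precisely the reading your first step adopts and which you correctly identify as the only genuinely causal part of the proof.
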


\begin{proof}[Proof of Theorem~\ref{thrm:bounds_phi}]
First, note that, for a given representation $\phi$, the representation $\phi(Z)$ is still a valid (discrete) instrument that satisfies Assumptions~\ref{ass:consistency} and \ref{ass:exclsion}. Hence, we can apply Lemma~\ref{lemma:bounds_discrete} using $\phi(Z)$ as an instrument and immediately obtain the bounds from Theorem~\ref{thrm:bounds_phi}, but with \emph{representation-induced nuisance functions}
$\mu_{\phi}^a(x, \ell) = \mathbb{E}[Y|X=x, A=a, \phi(Z)=\ell]$ and $\pi_{\phi}(x, \ell) = \mathbb{P}(A=1 | X=x, \phi(Z)=\ell)$ for $\ell \in \{0, \dots, k\}$. 

We can write the representation-induced response function as
\begin{equation}\label{eq:mu_transform}
    \begin{aligned}
       & \mathbb{E}[Y|X=x, A=a, \phi(Z)=\ell] \stackrel{Z \indep X}{=} \int_Z \mathbb{E}[Y|X=x, A=a, Z=z] \mathbb{P}(Z=z|A=a, \phi(Z)=\ell) \diff z  \\
       &= \int_Z \mathbb{E}[Y|X=x, A=a, Z=z] \frac{\mathbb{P}(\phi(Z)=\ell|A=a, Z=z) \mathbb{P}(A=a| Z=z) \mathbb{P}(Z=z)}{\mathbb{P}(A=a| \phi(Z)=\ell) \mathbb{P}(\phi(Z)=\ell)}\diff z \\
       &= \frac{1}{\mathbb{P}(A=a| \phi(Z)=\ell) \mathbb{P}(\phi(Z)=\ell)} \\ 
       & \qquad \int_Z \mathbb{E}[Y|X=x, A=a, Z=z]  \mathbb{P}(\phi(Z)=\ell|A=a, Z=z) \mathbb{P}(A=a| Z=z) \mathbb{P}(Z=z)\diff z \\
       &= \frac{1}{\mathbb{P}(A=a| \phi(Z)=\ell) \mathbb{P}(\phi(Z)=\ell)} \\ 
       & \qquad\qquad \int_Z \mathbb{E}[Y|X=x, A=a, Z=z]  \mathbb{P}(\phi(Z)=\ell| Z=z) \mathbb{P}(A=a| Z=z) \mathbb{P}(Z=z) \diff z 
    \end{aligned}
\end{equation}
and the representation-induced propensity score as
\begin{equation}\label{eq:pi_transform}
    \begin{aligned}
      & \mathbb{P}(A=1 | X=x, \phi(Z) = \ell) \stackrel{Z \indep X}{=} \int_Z \mathbb{P}(A=1 | X=x, Z = z) \mathbb{P}(Z=z | \phi(Z) = \ell) \diff z  \\
       =& \int_Z \mathbb{P}(A=1 | X=x, Z = z) \mathbb{P}(\phi(Z)=\ell | Z=z) \frac{\mathbb{P}(Z=z)}{\mathbb{P}(\phi(Z)=\ell)} \diff z  \\
       =& \frac{1}{\mathbb{P}(\phi(Z)=\ell)} \int_Z \mathbb{P}(A=1 | X=x, Z = z) \mathbb{P}(\phi(Z)=\ell | Z=z) \mathbb{P}(Z=z)\diff z,
    \end{aligned}
\end{equation}
which completes the proof.
\end{proof}

\subsection{Proof of Lemma~\ref{lem:error_decomposition}}
\begin{proof}
The result follows from 
\begin{align}
    \mathbb{E}_n \left[\left(b_\ast^+(x) - \hat{b}_{\phi^\ast}^+(x)\right)^2\right] & = \mathbb{E}_n \left[\left(b_\ast^+(x) - {b}_{\phi^\ast}^+(x) + {b}_{\phi^\ast}^+(x) - \hat{b}_{\phi^\ast}^+(x)\right)^2\right] \\
    & \leq 2 \left( \left(b_\ast^+(x) - {b}_{\phi^\ast}^+(x)\right)^2 + \mathbb{E}_n \left[\left({b}_{\phi^\ast}^+(x) - \hat{b}_{\phi^\ast}^+(x)\right)^2\right] \right)\\
    & \overset{(\ast)}{(=)} 2 \left( \left(b_\ast^+(x) - {b}_{\phi^\ast}^+(x)\right)^2 + \mathbb{E}_n\left[b_{\phi^\ast}^+(x) - \hat{b}_{\phi^\ast}^+(x)\right]^2 + \mathup{Var}_n(\hat{b}_{\phi^\ast}^+(x)) \right),
\end{align}
where we used the bias-variance decomposition for the MSE for $(\ast)$.
\end{proof}

\subsection{Proof of Theorem~\ref{thrm:variances}}

\begin{proof}
We derive the asymptotic distributions of the estimators $\hat{\mu}_{\phi}^{a}(x, \ell)$ from Eq.~\eqref{eq:estimates_nuisance_mu} and $\hat{\pi}_{\phi}(x, \ell)$ from Eq.~\eqref{eq:estimates_nuisance_phi}. We proceed by analyzing the numerator and denominator of each estimator. First, we show that both are asymptotically normal and then we apply the delta method to obtain the asymptotic distribution of the ratios.

\textbf{Distribution of $\hat{\mu}_{\phi}^{a}(x, \ell)$:} Recall from Equation~\eqref{eq:estimates_nuisance_mu} that we can write $\hat{\mu}_{\phi}^{a}(x, \ell)$ as

\begin{equation}
\hat{\mu}_{\phi}^{a}(x, \ell) = \frac{S_n}{N_n},
\end{equation}
where

\begin{align}
S_n &= \frac{1}{n}\sum_{j=1}^{n} W_j, \quad \text{with} \quad W_j = \hat{\mu}^{a}(x, z_j) \mathds{1}\{\phi(z_j) = \ell\} [a \hat{\eta}(z_j) + (1 - a)(1 - \hat{\eta}(z_j))], \\
N_n &= \frac{1}{n}\sum_{j=1}^{n} D_j, \quad \text{with} \quad D_j = \mathds{1}\{\phi(z_j) = \ell, a_j = a\}.
\end{align}

We define the moments
\begin{align}
\mu_W &= \mathbb{E}[W] = p_\ell \theta_\ell \\
\sigma_W^2 &= \operatorname{Var}(W) = p_\ell (\gamma_\ell - p_\ell \theta^2_\ell) \\
\mu_D &= \mathbb{E}[D] = p_\ell q_\ell\\
\sigma_D^2 &= \operatorname{Var}(D) = p_\ell q_\ell (1 - p_\ell q_\ell) \\
\mathrm{c}_{WD} &= \operatorname{Cov}(W, D) = p_\ell q_\ell \theta_\ell (1 - p_\ell),
\end{align}
where $p_\ell = \mathbb{P}(\Phi(Z) = \ell)$, $q_\ell = \mathbb{P}(A = a \mid \Phi(Z) = \ell)$, $\theta_\ell = \E[g(Z) \mid \Phi(Z) = \ell]$, and $\gamma_\ell = \E[g(Z)^2 \mid \Phi(Z) = \ell]$, with $g(Z) = \hat{\mu}^{a}(x, Z) (a \hat{\eta}(Z) + (1 - a)(1 - \hat{\eta}(Z))$.

By the central limit theorem, we know that
\begin{equation}
\sqrt{n} \begin{pmatrix} S_n \\ N_n \end{pmatrix} \xrightarrow{d} \mathcal{N}_2\left(\mu = \begin{pmatrix} \mu_W \\ \mu_D \end{pmatrix}, \Sigma = \begin{pmatrix} \sigma_W^2 & \mathrm{c}_{WD}\\ \mathrm{c}_{WD} & \sigma_D^2 \end{pmatrix}\right).
\end{equation}

Let $f(s, n) = \frac{s}{n}$. We are interested in the asymptotic distribution of the ratio $\hat{\mu}_{\phi}^{a}(x, \ell) = f(S_n,{N_n})$. The delta method states that

\begin{equation}
\sqrt{n} f(S_n, N_n) \xrightarrow{d} \mathcal{N}_2\left(f(\mu_W, \mu_D), \nabla f^\top (\mu_W, \mu_D) \Sigma \nabla f (\mu_W, \mu_D) \right)    
\end{equation}

Using that the gradient is $\nabla f^\top (\mu_W, \mu_D) = \left(\dfrac{1}{\mu_D}, -\dfrac{\mu_W}{\mu_D^2}\right)$, we can obtain the asymptotic variance via

\begin{align}
\nabla f^\top (\mu_W, \mu_D) \Sigma \nabla f (\mu_W, \mu_D) &= \dfrac{\sigma_W^2}{\mu_D^2} - 2 \dfrac{\mu_W \mathrm{c}_{WD}}{\mu_D^3} + \dfrac{\mu_W^2 \sigma_D^2}{\mu_D^4} \\
&= \frac{1}{p_\ell}\left(\frac{(\gamma_\ell - \theta_\ell^2)}{q_\ell^2} + \frac{\theta_\ell^2(1 - p_\ell q_\ell)}{q_\ell^3} \right) \\
& = \frac{1}{p_\ell}\left(\frac{\operatorname{Var}(g(Z) \mid \Phi(Z) = \ell)}{q_\ell^2} + \frac{\theta_\ell^2(1 - p_\ell q_\ell)}{q_\ell^3} \right).
\end{align}

\textbf{Distribution of $\hat{\pi}_{\phi}(x, \ell)$:} Recall from Equation~\eqref{eq:estimates_nuisance_phi} that we can write $\hat{\pi}_{\phi}(x, \ell)$ as

\begin{equation}
\hat{\pi}_{\phi}(x, \ell) = \frac{S_n}{N_n},
\end{equation}
where

\begin{align}
S_n &= \frac{1}{n}\sum_{j=1}^{n} W_j, \quad \text{with} \quad W_j = \hat{\pi}(x, z_j)  \mathds{1}\{\phi(z_j)=l\}, \\
N_n &= \frac{1}{n}\sum_{j=1}^{n} D_j, \quad \text{with} \quad D_j = \mathds{1}\{\phi(z_j)=l\}.
\end{align}

We define the moments
\begin{align}
\mu_W &= \mathbb{E}[W] = p_\ell \theta_\ell \\
\sigma_W^2 &= \operatorname{Var}(W) = p_\ell (\gamma_\ell - p_\ell \theta^2_\ell) \\
\mu_D &= \mathbb{E}[D] = p_\ell\\
\sigma_D^2 &= \operatorname{Var}(D) = p_\ell (1 - p_\ell) \\
\mathrm{c}_{WD} &= \operatorname{Cov}(W, D) = p_\ell \theta_\ell (1 - p_\ell),
\end{align}
where $p_\ell = \mathbb{P}(\Phi(Z) = \ell)$, $\theta_\ell = \E[h(Z) \mid \Phi(Z) = \ell]$, and $\gamma_\ell = \E[h(Z)^2 \mid \Phi(Z) = \ell]$, with $h(Z) = \hat{\pi}(x, Z)$.

By the central limit theorem, we know that
\begin{equation}
\sqrt{n} \begin{pmatrix} S_n \\ N_n \end{pmatrix} \xrightarrow{d} \mathcal{N}_2\left(\mu = \begin{pmatrix} \mu_W \\ \mu_D \end{pmatrix}, \Sigma = \begin{pmatrix} \sigma_W^2 & \mathrm{c}_{WD}\\ \mathrm{c}_{WD} & \sigma_D^2 \end{pmatrix}\right).
\end{equation}

We can then calculate the asymptotic variance using the delta method as above and obtain
\begin{align}
\nabla f^\top (\mu_W, \mu_D) \Sigma \nabla f (\mu_W, \mu_D) &= \dfrac{\sigma_W^2}{\mu_D^2} - 2 \dfrac{\mu_W \mathrm{c}_{WD}}{\mu_D^3} + \dfrac{\mu_W^2 \sigma_D^2}{\mu_D^4} \\
&= \frac{1}{p_\ell}(\gamma_\ell - \theta_\ell^2) \\
& = \frac{1}{p_\ell}\operatorname{Var}(h(Z) \mid \Phi(Z) = \ell).
\end{align}

\end{proof}

\clearpage

\section{Real-world relevance and validity of assumptions}
\label{app:RW_assumptions}

In this section, we elaborate on the real-world relevance of our considered setting and show that our assumptions often hold and are even weaker than the ones of existing approaches. For that, we draw upon two real-world settings.

\subsection{Mendelian randomization}
Mendelian randomization (MR; the main motivational example from our paper) is a widely used method from biostatistics to estimate the causal effect of some treatment or exposure (such as alcohol consumption) on some outcome (such as cardiovascular diseases). We refer to \cite{pierce2018mendelian} for an introduction to MR, which also shows that MR is widely used in medicine. For that, genetic variants (such as different single nucleotide polymorphisms, SNPs) are used as instruments where it is known that they only influence the exposure but not directly the outcome. Our method for partial identification with complex instruments is perfectly suited for this common real-world application. Depending on the use case, either a predefined genetic risk score \citep{burgess2020robust} as a continuous variable, or up to hundreds of SNPs are used simultaneously as IVs to strengthen the power of the analysis, resulting in high-dimensional instruments \citep{pierce2018mendelian}.

\textbf{Validity of assumptions:} The IV assumptions used in our paper such as the exclusion and independence assumptions can be ensured by expert knowledge (e.g.,  given some observed confounder age ($X)$, genetic variations (Z) do not affect age) or, in some cases,  they can be even directly tested for \citep{glymour2012credible}. In contrast, as explained in Sec.~\ref{sec:related_work}, existing methods for MR rely on additional hard assumptions on top such as the knowledge about the parametric form of the underlying data-generating process. Especially with such high-dimensional IVs, misspecification of these models may result in significantly biased effect estimates. In contrast, our method does not rely on any parametric assumption and also no additional assumptions compared to previous methods, thus enabling more reliable causal inferences in the real-world application of MR by using \emph{strictly weaker} assumptions than existing work.

\subsection{Indirect experiments}
With indirect experiments (IEs), we show that, in principle, our method is not constrained to medical applications but is also highly useful in various other domains. IEs are widely applied in various areas such as social sciences or public health to estimate causal effects in settings with non-adherence, i.e., where people cannot be forced to take treatments but rather be encouraged by some nudge \citep{pearl1995causal}. For instance, researchers might be interested in estimating the effect of some treatment such as participating in a healthcare program ($T$) on some health outcome $Y$ by randomly assigning nudges $Z$ (IVs) in the form of different text messages on social media promoting participation. Here, common nudges (IVs) are in the form of, for instance, text or even image data and thus high-dimensional, showing the necessity of a method capable of handling complex IVs such as ours.

\textbf{Validity of assumptions:} As a major benefit of IEs, the IV assumptions are \emph{ensured per design} as the IVs are randomly assigned, and, thus they always hold. Hence, our method provides a promising tool for evaluating the effects of IEs.

\newpage

\section{Implementation and training details}\label{sec:app_implementation}

\textbf{Model architecture:} For all our models, we use MLPs with ReLU activation function. For $\hat{\mu}_{\phi}^{a}$, we use 2 layers to encode $X$ and 3 layers to encode $Z$. Then, we concatenate the outputs and add 2 additional shared layers. Finally, we calculate the outputs by a separate treatment head for $A=0$ and $A=1$ to ensure the expressiveness of $A$ for predicting $Y$. For $\hat{\pi}$, we use the same architecture. For $\hat{\eta}$, we use 3 layers. For $\phi_\theta$, we also use 3 layers and apply discretization on top of the $K$ outputs \citep{jang2016categorical}. For the nuisance parameters of the $k$-means baseline, we use the same models as for $\hat{\mu}_{\phi}^{a}$ and $\hat{\pi}$ for a fair comparison. We use a neuron size of 10 for all hidden layers.

\textbf{Training details:} For training our nuisance functions, we use an MSE loss for the functions learning the continuous outcome $Y$ and a cross-entropy loss for functions learning the binary treatment $A$. For all models, we use the Adam optimizer with a learning rate of $0.03$. We train our models for a maximum of 100 epochs and apply early stopping. For our method, we fixed $\lambda=1$ and performed random search to tune for $[0, 1]$ for $\gamma$. We use PyTorch Lightning for implementation. Each training run of the experiments could be performed on a CPU with 8 cores in under 15 minutes.

\newpage

\section{Data description}\label{appendix:data}
\textbf{Dataset 1:} We simulate an observed confounder $X \sim \mathrm{Uniform}[-1, 1]$ and an unobserved confounder $U \sim \mathrm{Uniform}[-1, 1]$. 

The instrument $Z$ is defined as
\begin{equation}
Z \sim \text{Mixture}\left(\frac{1}{2} \mathrm{Uniform}[-1, 1] + \frac{1}{4} \mathrm{Beta}(2, 2) + \frac{1}{4} (-\mathrm{Beta}(2, 2))\right).
\end{equation}

We define $\rho$ as
\begin{equation}
\rho = \frac{1}{1 + \exp\left(-\left( \left(2 |Z| - \max(Z)\right) + X + 0.5 \cdot U\right)\right)}.
\end{equation}

Then, the propensity score is given by
\begin{equation}
\pi = (\rho - 0.5) \cdot 0.9 + 0.5.
\end{equation}

We then sample our treatment assignments from the propensity scores as
\begin{equation}
A \sim \textrm{Bernoulli}({\pi}).
\end{equation}

The conditional average treatment effect (CATE) is defined as
\begin{equation}
\tau(X) = -\frac{(2.5X)^4 + 12 \sin(6X) + 0.5 \cos(X)}{80} + 0.5.
\end{equation}

The outcome $Y$ is then generated by
\begin{equation}
Y = \left(X + 0.5 U + 0.1 \cdot \text{Laplace}(0, 1)\right) \cdot 0.25 + \tau(X) \cdot A.
\end{equation}

\textbf{Dataset 2:} We keep the other properties but change the propensity score to be more complex, which results in harder-to-learn optimal representations of $Z$ for tightening the bounds.
The propensity score is given by
\begin{equation}
\pi = \sin(2.5Z + X + U) \cdot 0.48 + 0.48 + \frac{0.04}{1 + \exp(-3 |Z|)}.
\end{equation}

\textbf{Dataset 3:} We simulate $X$ and $U$ as above. Then, we sample a $d$-dimensional $Z \in \{0, 1\}^{d}$ with $d=20$ as
\begin{equation}
    Z \sim \text{Binomial}(d, 0.5).
\end{equation}
Thus, our modeling is here inspired by using multiple SNPs (appearances of genetic variations) as instruments \citep{burgess2020robust}, where we simulate potential variations for 20 genes.

Then, we define 
\begin{equation}\label{eq:p_d3}
\rho = \sum_{j=1}^d [\mathds{1}\{j \leq 5 \} Z_j]
\end{equation}
and the propensity score, inspired by the more complex setting of Dataset 2, as 
\begin{equation}
    \pi = 0.48\;\text{sin}(10 \rho + X + U) + 0.48 + \frac{0.04}{1+\exp(-3|5 \rho |)}.
\end{equation}
Then, we define the CATE as
\begin{equation}
\tau(X) = -\frac{-(1.6X+0.5)^4 + 12 \sin(4X + 1.5) + \cos(X)}{80} + 0.5.
\end{equation}
and the outcome dependent on $\tau$, $X$ and $U$ analogously as for Datasets 1 and 2.

By Eq.~\eqref{eq:p_d3}, we ensure that some of the modeled SNPs are irrelevant for $\pi$ and thus do not affect the treatment or exposure $A$. Thereby, we focus on realistic settings in practice, where the relevance of instruments cannot always be ensured which imposes challenges especially for existing methods for point identification, but not for our approach. Further, we ensure that the latent score $\rho$ can only take 5 discrete levels. This allows us to approximate oracle bounds using the discrete bounds on top of $\rho$ by leveraging Lemma~\ref{lemma:bounds_discrete} such that we can evaluate our method and the baseline in comparison to oracle bounds.

To create the simulated data used in Sec.~\ref{sec:experiments}, we sample $n=2000$ from the data-generating process above. We then split the data into train (40\%), val (20\%), and test (40\%) sets such that the bounds and deviation can be calculated on the same amount of data for training and testing.

\end{document}